\newcommand{\red}[1]{\textcolor{red}{#1}}
\newcommand{\green}[1]{\textcolor{orange}{#1}}
\newcommand{\jscomm}[1]{{\bf{{\red{{JS -- #1}}}}}}
\newcommand{\adnote}[1]{{{{\green{{AD -- #1}}}}}}
\DeclareMathOperator*{\gap}{\mathsf{err}}
\DeclareMathOperator*{\pinf}{\vphantom{p}inf}
\newcommand{\real}{\ensuremath{\mathbb{R}}}
\newcommand{\En}{\ensuremath{\mathbb{E}}}
\newcommand{\ind}{\mathbb{I}}
\newcommand{\simplex}{\Delta}
\newcommand{\defn}{:\,=}
\newtheorem{theorem}{Theorem}
\newtheorem{lemma}{Lemma}
\newtheorem{corollary}{Corollary}
\newtheorem{proposition}{Proposition}
\newtheorem{assumption}{Assumption}
\newcommand{\1}{\ensuremath{{\sf (i)}}}
\newcommand{\2}{\ensuremath{{\sf (ii)}}}
\newcommand{\algdet}{Comptron\xspace}
\newcommand{\algnoise}{Rob-Comptron\xspace}
\DeclareMathOperator*{\argmin}{argmin}
\DeclareMathOperator*{\argmax}{argmax}
\DeclareMathOperator*{\sign}{sign}
\long\def\@makecaption#1#2{
        \vskip 0.8ex
        \setbox\@tempboxa\hbox{\small {\bf #1:} #2}
        \parindent 1.5em  
        \dimen0=\hsize
        \advance\dimen0 by -3em
        \ifdim \wd\@tempboxa >\dimen0
                \hbox to \hsize{
                        \parindent 0em
                        \hfil
                        \parbox{\dimen0}{\def\baselinestretch{0.96}\small
                                {\bf #1.} #2
                                }
                        \hfil}
        \else \hbox to \hsize{\hfil \box\@tempboxa \hfil}
        \fi
        }
\newcommand{\xdist}{\mathcal{D}_\x}
\newcommand{\X}{\mathcal{X}}
\newcommand{\Y}{\mathcal{Y}}
\newcommand{\F}{\mathcal{F}}
\newcommand{\Flin}{\mathcal{\F}_{\textsf{lin}}}
\newcommand{\x}{x}
\newcommand{\y}{y}
\newcommand{\f}{f}
\newcommand{\bx}{\mathbf{\x}}
\newcommand{\by}{\mathbf{\y}}
\newcommand{\hf}{\hat{f}}
\newcommand{\tf}{\tilde{f}}
\newcommand{\hfk}{\hf_{\kor}}
\newcommand{\hfkn}{\hf_{\kor, \nsamp}}
\newcommand{\pfrob}{p_{\sf{rob}}}
\newcommand{\ferm}{\f_{\sf{ERM}}}
\newcommand{\fdim}{d}
\newcommand{\nsamp}{n}
\newcommand{\xset}{S}
\renewcommand{\u}{u}
\newcommand{\tu}{\tilde{u}}
\newcommand{\hu}{\hat{u}}
\newcommand{\utrue}{u^*}
\newcommand{\uclass}{\mathcal{U}}
\newcommand{\util}{U}
\newcommand{\ugap}{\u_{\mathsf{gap}}}
\newcommand{\hugap}{\hat{\u}_{\sf gap}}
\newcommand{\utrgap}{\utrue_{\sf gap}}
\newcommand{\hutil}{\hat{\util}}
\newcommand{\kor}{k}
\newcommand{\oracle}{\mathcal{O}}
\newcommand{\prefto}{\succeq}
\newcommand{\query}{q}
\newcommand{\qry}{\query}
\newcommand{\nq}{\eta}
\newcommand{\nqry}{n_\qry}
\newcommand{\ld}{p}
\newcommand{\imax}{i_{\sf{max}}}
\newcommand{\conf}{\delta}
\newcommand{\minmax}{\mathfrak{M}}
\newcommand{\rad}{\mathfrak{R}}
\newcommand{\comm}{\circ}
\begin{document}

\begin{center}
{\bf {\LARGE{Agnostic learning with unknown utilities}}}

\vspace*{.2in}

\large{
\begin{tabular}{cccc}
Kush Bhatia$^\dagger$ &  Peter L. Bartlett$^{\dagger, \ddagger}$ & Anca D. Dragan$^{\dagger}$ & Jacob Steinhardt$^{\ddagger}$
\end{tabular}
}
\vspace*{.2in}

\begin{tabular}{c}
Department of Electrical Engineering and Computer Sciences, UC
Berkeley$^\dagger$ \\ Department of Statistics, UC Berkeley$^\ddagger$
\end{tabular}

\vspace*{.2in}

\today

\vspace*{.2in}

\end{center}

\begin{abstract}
Traditional learning approaches for classification implicitly assume that each mistake has the same cost. 
In many real-world problems though, the utility of a decision depends on the underlying context $\x$ and decision $\y$; for instance, misclassifying a stop sign is worse than misclassifying a road-side postbox. However, directly incorporating these utilities into the learning objective is often infeasible since these can be quite complex and difficult for humans to specify.

We formally study this as \emph{agnostic learning with unknown utilities}: given a dataset $\xset = \{\x_1, \ldots, \x_\nsamp\}$ where each data point $\x_i \sim \xdist$ from some unknown distribution $\xdist$, the objective of the learner is to output a function $\f$ in some class of decision functions $\F$ with small excess risk. This risk measures the performance of the output predictor $\f$ with respect to the best predictor in the class $\F$ on the \emph{unknown} underlying utility  $\utrue:\X\times\Y\mapsto [0,1]$. This utility $\utrue$ is not assumed to have any specific structure and is allowed to be any bounded function. This raises an interesting question whether learning is even possible in our setup, given that obtaining a generalizable estimate of utility $\utrue$ might not be possible from finitely many samples. Surprisingly, we show that estimating the utilities of only the sampled points~$\xset$ suffices to learn a decision function which generalizes well. 

With this insight, we study mechanisms for eliciting information from human experts which allow a learner to estimate the utilities $\utrue$ on the set $\xset$.
While humans find it difficult to directly  provide  utility values reliably, it is often easier for them to provide comparison feedback based on these utilities. We show that, unlike in the realizable setup, the vanilla comparison queries where humans compare a pair of decisions for a single input $\x$ are insufficient. We introduce a family of elicitation mechanisms by generalizing comparisons, called the $\kor$-comparison oracle, which enables the learner to ask for comparisons across $\kor$ different inputs $x$ at once. We show that the excess risk in our agnostic learning framework decreases at a rate of $O\left(\frac{1}{\kor} \right)$ with such queries. This result brings out an interesting accuracy-elicitation trade-off -- as the order $\kor$ of the oracle increases, the comparative queries become harder to elicit from humans but allow for more accurate learning.
\end{abstract}

\section{Introduction}\label{sec:intro}
Our focus is on learning predictive models for decision-making tasks. Current paradigms for classification tasks use datasets consisting of scenarios\footnote{We use the term scenario/context/feature for the vector $\x$ interchangeably.} $\x$  along with the decisions $\y$ taken by human experts to learn a decision function\footnote{We consider finite decision spaces $\Y$.} $\f:\X \mapsto \Y$. For instance, in economics such decisions correspond to whether buyers bought an item at a suggested price~\cite{afriat1967,beigman2006}, in robotics such feedback comprises expert demonstrations in imitation learning~\cite{abbeel2004, argall2009}, and in machine learning literature such supervision consists of labels selected by human annotators~\cite{bishop2006, duda2012}.

When we optimize models to predict correctly on these datasets, we often implicitly assume that all mistakes are equally costly, and that each scenario $\x$ in the data is just as important. In reality though, this is rarely the case. For instance, the standard $0-1$ loss for classification tasks assigns a unit of loss for each mistake, but misclassifying a stop sign is significantly more dangerous than misclassifying a road-side postbox. In Figure~\ref{fig:intro}, we expand on this insight and illustrate how learning from such revealed decisions can often
lead to suboptimal decision functions.

What is missing from this classical framework is that for most decision-making tasks there exists an underlying function \mbox{$\utrue:\X\times \Y \mapsto [0,1]$} which evaluates the utility of a decision $\y$ depending on the surrounding context~$\x$. Depending on the decision task, such utility functions can encode buyer preferences in economics, rewards for robotic skills, or misprediction costs for classification. However, these utility functions are a priori unknown to the learner since the dataset consists only of context-decision pairs $(\x,\y)$. Furthermore, asking human experts to write down these complex utility functions 
can be quite challenging and prone to serious errors~\cite{amodei2016}.

One commonly studied approach, referred to as learning from revealed preferences in economics~\cite{beigman2006, balcan2014} and inverse reinforcement learning (IRL) in the machine learning literature~\cite{ng2000, ziebart2008}, assumes that the utility function $\utrue$ belongs to some pre-specified class and uses the fact that decision $\y$ was the optimal decision for scenario $\x$ to learn estimates of these utilities. This setup is called the well-specified or realizable setup. However, this posited utility class can be misspecified in that the underlying utility $\utrue$ might not belong to this class. The correctness of such learning approaches crucially relies on the well specified assumption and offers no guarantees on how their performance degrades in the presence of class misspecifications.

\begin{figure}[t!]
  \centering\hspace*{-4ex}
  \captionsetup{font=small}

\begin{tabular}{c}
  \vspace{-1mm}\includegraphics[width=0.75\textwidth]{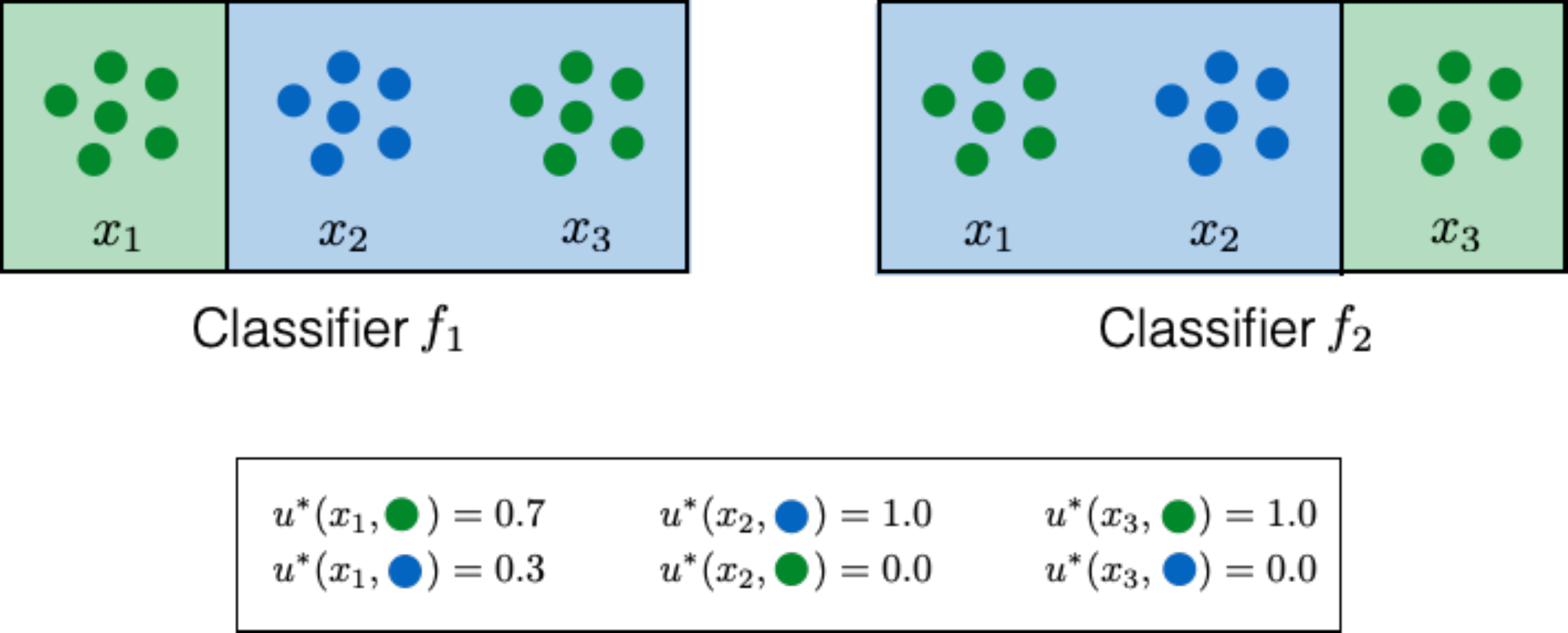}
  \vspace{3mm}
\end{tabular}
\caption{\small Consider a binary decision-task with decisions G(reen) and B(lue). The instance space comprises of three  equiprobable clusters of datapoints $\x_1, \x_2$ and $\x_3$, and have associated utilities $\utrue$ for decisions B and G. The colour of the datapoints represents the decision with higher utility. The function class $\F$ consists of linear predictors. In the traditional learning setups where the dataset consists of pairs $(\x, \y)$, no learner will have enough information to select between $\f_1$ and $\f_2$ since the $0-1$ error for both is $\nicefrac{1}{3}$. In contrast, using a $2$-comparison oracle, a learner can ask a query of the form ``Which of $\utrue(\x_1, \text{G}) +  \utrue(\x_3, \text{B})$ or $\utrue(\x_1, \text{B}) +  \utrue(\x_3, \text{G})$ is bigger?". This allows them to infer that correctly predicting $\x_3$ gives a higher overall utility and output the optimal decision function~$\f_2$. \vspace{-3mm}}
  \label{fig:intro}
\end{figure}

We overcome this uncertainty in specifying the utility function $\utrue$ by proposing an \emph{agnostic learning} framework which places no assumptions on the class of utility functions. Instead, we consider decision functions belonging to some class $\F = \{ \f \;|\; \f:\X \mapsto \Y\}$ and study the objective of obtaining the ``best" decision rule in $\F$ with respect to the unknown utility $\utrue$.  Formally, given the decision class $\F$ and samples from a distribution $\xdist$ over the feature space $\X$, the objective of the learner is to output a model $\hf \in \F$ with small excess risk or regret
\begin{align}
  \gap(\hf, \F) \defn \sup_{\f\in \F} \En_{\x \sim \xdist}[\utrue(x, \f(x))] -  \En_{\x \sim \xdist}[\utrue(x, \hf(x))]\;.
\end{align}
Our proposed notion of excess risk measures the performance of an estimator $\hf$ by comparing its decisions with those of the best predictive model in the class $\F$ under the utility $\utrue$. Contrast this with the classical agnostic learning framework~\cite{haussler1992} where the evaluation metric for classification measures what proportion of datapoints $\hf$ predicts correctly
\begin{align}
  {\sf{err}}_{\sf{cl}}(\hf, \F)\defn \sup_{\f\in \F} \En_{\x \sim \xdist}[\ind[\f(x) \neq \y_\x]] -  \En_{\x \sim \xdist}[\ind[\hf(x) \neq \y_\x]]\;,
\end{align}
where $\y_x = \argmax_{y \in \Y} \utrue(\x, \y)$ represents the expert decision (revealed decision) for scenario $\x$. Our above framework generalizes the proper agnostic learning framework -- we restrict our attention to proper learners which output models $\hf \in \F$ and the decision class $\F$ is agnostic towards the unknown underlying utility $\utrue$. Indeed, our agnostic framework allows for misspecification in the decision class $\F$ and allows for situations where no predictive model $\f \in \F$ matches the expert predictions $\y_x$ for all instances $\x$.

As highlighted by Figure~\ref{fig:intro}, such a misspecification in the function class $\F$ implies that no decision function $\f \in \F$ will be able to perfectly fit these optimal decisions $\y_x$ for all points $\x\in \xset$. In order to solve the agnostic learning problem, it is necessary for the learner to understand the how costly these different mistakes are relative to each other. From the learners perspective, observing only the optimal decisions $\y_x$  for each instance $\x$, such as revealed preferences or expert demonstrations, are clearly insufficient to obtain any information about these costs.
One way to overcome this information-theoretic limit of revealed decisions is to directly elicit the utilities from humans -- for scenarios $\x$ and decision $\y$, ask an expert ``What is the utility $\utrue(\x, \y)$ for taking the decision $\y$ given situation $\x$?". However, since the underlying utility $\utrue$ can be quite complex, humans are inept at answering them reliably~\cite{miller1956, stewart2005}. For instance, it can be challenging for humans to correctly specify the costs of mispredicting, say, a stop sign as a red signal relative to that of predicting it as a post-box.

On the other hand, it is often easier for humans to provide comparative evaluations based on these utilities~\cite{thurstone1927, furnkranz2010} and allow the learner to obtain relative feedback. Using these, the learner can query an expert with comparison or preference queries asking ``For instance $\x$, which of the two utilities $\utrue(\x, \y_1)$ or $\utrue(\x, \y_2)$ is larger?". Such vanilla comparisons can allow the learner to infer relative utilities for decisions $\y_1$ and $\y_2$ for a given context $\x$; the learner can conclude that mispredicting stop sign as post-box is worse than mispredicting it as a red signal. However, such feedback still does not provide any information about the mistake costs across different examples -- given a choice, should the learner correctly predict a stop-sign or correctly predict a post-box?

While vanilla comparisons are insufficient for the agnostic setup, let us consider the other extreme: suppose that we have access to an oracle which can provide us with comparisons of overall utilities for functions $\f_1, \f_2 \in \F$. That is, the oracle can answer question of the form ``Which of the two overall utilities $\En_{x}[\utrue(x, \f_1(x))]$ or $\En_{x}[\utrue(x, \f_2(x))]$ is larger?". Given access to such an oracle, we will be able to find the optimal classifier in the class $\F$. We call this the $\infty$-comparison oracle since such preferences requires a human to reason about the utilities over the entire feature space $\X$ at once. Even for a small image classification task with a million images, this would require a human to compare the utility of a million simultaneous predictions! While this approach does allows for optimal estimation, the trade-off is that it puts the complete burden of learning on the human's side. It is worth highlighting that the comparisons between lotteries used to establish the von Neumann-Morgenstern utility theorem~\cite{morgenstern1953} can be shown to be a special case of such an $\infty$-comparison oracle.

While comparison queries only allow comparison within a single instance, the $\infty$-comparison oracle takes the other extreme and requires a comparison along all instances. However, we need not restrict our self to either of these extremes; our key insight is that there is a natural spectrum of such comparisons, which we call $\kor$-comparisons which interpolate between the single or $1$-comparison and the $\infty$-comparison oracle. Such comparison queries allow a learner to pick $\kor$ instances $\{\x_1, \ldots, \x_k\}$ and two sets of corresponding decision, $\{\y_1, \ldots, \y_\kor\}$ and $\{\y'_1, \ldots, \y'_\kor\}$, and ask ``Which of the cumulative utilities $\sum_i \utrue(\x_i, \y_i)$ or
$\sum_i \utrue(\x_i, \y'_i)$ is bigger?". For instance, for the example in Figure~\ref{fig:intro}, giving the learner access to a $2$-comparison oracle allows the algorithm to output the optimal decision function.

These higher-order comparison oracles form a natural hierarchy of elicitation mechanisms for the learner with a $\kor'$-oracle being strictly more informative than a $\kor$-oracle for $\kor' > \kor$. They allow for a natural trade-off between accuracy and elicitation in the learning with unknown utilities framework. As we increase the order $\kor$ of the oracle, the learner can obtain finer information about the utilities $\utrue$ and output functions with lower excess risk. However, this increase in information comes at the expense of asking for a harder elicitation from the human expert.

\paragraph{Our Contributions.} We propose a novel framework, which we call \emph{agnostic learning with unknown utilities}, for studying decision problems wherein the learner is evaluated with respect to an unknown utility function. Within this framework, we show that standard approaches which work well in the realizable setup, such as revealed preferences as well as vanilla comparisons, can perform quite poorly in the face of misspecification and can have excess risk $\Omega(1)$. To overcome this, we propose a family of elicitation mechanisms, the $\kor$-comparisons, which allows the learner access to finer information from an human expert with increasing values of the order $\kor$. Our main results, detailed in Section~\ref{sec:main-res}, provide a tight characterization of the excess risk as a function of the order $\kor$ of the comparison oracle available to the learner. These result brings out an interesting accuracy-elicitation trade-off -- as the order $\kor$ of the oracle increases, the comparative queries allow for more accurate learning in our setup but become harder to elicit from humans.

We would like to highlight that increasing the order $\kor$ of the comparisons could lead to potentially biased and noisy responses from the human expert. As a consequence, there might be an additional trade-off involving the \emph{quality of the information} obtained by increasing the order. While we do not focus on this aspect of elicitation, it is an interesting direction for future work.

\paragraph{Paper Organization.} The remainder of the paper is organized as follows: Section~\ref{sec:prob-form}  introduces our agnostic learning with unknown utilities problem setup and the $\kor$-comparison elicitation mechanism, and Section~\ref{sec:main-res} gives an overview of our main results and algorithmic contributions. In Section~\ref{sec:bin-pred}, we study excess risk bounds for the
binary decision problem in our framework and propose our algorithm, \algdet, to learn from higher-order comparisons and in Section~\ref{sec:local-minimax}, we study adaptive estimators which are optimal for each instance of our problem.

\section{Problem formulation}\label{sec:prob-form}
In this section, we formally state our learning with unknown utilities problem and introduce the $\kor$-comparison oracle. Let $\X \subseteq \real^\fdim$ represent the space of feature vectors, $\Y$ denote the corresponding decision space and $\F$ denote a class of decision making functions, given as $\F = \{\f \; | \; \f:\X \mapsto \Y \}$. Our framework considers an underlying utility function $\utrue: \X\times \Y \mapsto [0,1]$ which assigns a non-negative real value for making a decision $\y \in \Y$ given a situation $\x \in \X$. Further, let us denote the set
\begin{align}\label{eq:uclass}
\uclass = \{ \u\; |\; \u: \X \times \Y \mapsto [0,1] \}
\end{align}
of all possible such utility functions. For any distribution $\xdist$ over the feature space $\X$, we define the expected utility of a decision function $\f \in \F$ as $\util(\f;\utrue) \defn \En_{\x \sim \xdist}[ \utrue(\x, \f(\x))]$.
Observe that such an expected utility model assumes that the utilities are additive across the different instances $\x$ and is a commonly studied model both in the machine learning, statistics and economics literature. We denote the excess risk of a function $\f$ with respect to the function class $\F$ by
\begin{align}\label{eq:excess-risk}
  \gap(\f, \F; \utrue) \defn \max_{\f'\in \F}\util(\f';\utrue) - \util(\f;\utrue).
\end{align}
Further, we denote the optimal decision for any instance $\x$ with respect to the underlying utility $\utrue$ by  \mbox{$\y_\x \defn \argmax_{\y \in \Y} \utrue(\x, \y)$}.

Similar to the classical agnostic learning setup~\cite{haussler1992}, we assume that the learner does not know the underlying distribution $\xdist$ of the instances. However, our setup differs from it in that we do not assume that the underlying utility function $\utrue$ is known to the learner. Instead, we provide the learner access to an oracle which allows the learner to elicit responses to higher-order preferences queries.

\paragraph*{Comparison Oracle} Since the utility function $\utrue$ is unknown to the learner, our framework allows the learner access to an oracle which provides comparative feedback based on the utilities $\utrue$. We consider a family of such oracles $\oracle_\kor$, each indexed by its order $\kor$ which determines the number of different instances the learner is allowed to specify in the comparison query. For an oracle $\oracle_\kor$, a learner is allowed to select a set of $\kor$ situations $\bx \in \X^\kor$ and two pairs of corresponding decisions $\by_1, \by_2 \in \Y^\kor$. The oracle then compares, in a possibly noisy manner, the cumulative utilities of the pair $(\bx, \by_1)$ and $(\bx, \by_2)$ and responds with the feedback on which one is larger. As the order $\kor$ of the oracle increases, the queries become more complex -- an expert is required to evaluate a larger number of instances at once. This family of comparison oracles captures a natural hierarchy of elicitation mechanisms where with each increasing value of $\kor$, a learner has access to more information about the utility function $\utrue$.

Formally, we represent a $\kor$-query by a tuple $(\bx, \by_1, \by_2)$ where the input $\bx = (\x_1, \ldots, \x_\kor)$ comprises $\kor$ feature vectors and the corresponding decision vectors $\by_1 = (\y_1, \ldots, \y_\kor)$ and $\by_2 = (\y'_1, \ldots, \y'_\kor)$.\footnote{We overload our notation and represent the cumulative utilities of the $\kor$ inputs $(\bx, \by)$ by  \mbox{$\utrue(\bx, \by) = \sum_{i}\utrue(\x_i, \y_i)$}.} Given such a query $\qry$, the oracle $\oracle_\kor$ provides the learner a binary response
\begin{align}\label{eq:oracle-def}
  \oracle_\kor(\query = (\bx, \by_1, \by_2)) = \begin{cases}
  \ind\left[\utrue(\bx, \by_1) \geq \utrue(\bx, \by_2)  \right] \quad &\text{with prob. } 1 - \nq_\qry\\
  1 - \ind\left[\utrue(\bx, \by_1) \geq \utrue(\bx, \by_2)  \right] \quad &\text{otherwise}
\end{cases},
\end{align}
where the parameter $0 \leq \nq_\qry < \frac{1}{2}$ represents the noise level corresponding to query $\qry$. Thus, the oracle\footnote{Note that while the oracle depends on the underlying utility function $\utrue$, our notation suppresses this dependence for clarity. We use the notation $\oracle_\kor(\qry;\utrue)$ whenever we want to make this dependence explicit.} $\oracle_\kor$ provides noisy comparisons of the cumulative utilities $\utrue(\bx, \by_1)$ and $\utrue(\bx, \by_2)$ with varying noise level $\nq_\qry$. Observe that we allow the noise levels $\nq_\qry$ to be different for each query $\qry$. 

\paragraph*{Problem Statement} We are interested in the \emph{agnostic learning with unknown utilities} problem where a learner is provided~$\nsamp$ samples $\xset = \{\x_1, \ldots, \x_\nsamp\}$ with each $\x_i \sim \xdist$ and access to the  $\kor$-comparison oracle described above, and is required to output a decision function $\hf \in \F$ such that error $\gap(\hf, \F)$ is small. The caveat is to do so with a minimum number of calls, which we term the query complexity $\nqry$ of learning, to the comparison oracle $\oracle_\kor$. Quantitatively, we would like to characterize the excess risk from equation~\eqref{eq:excess-risk} in terms of the number of sampled instances~$\nsamp$,  the order~$\kor$ of the comparison oracle and properties of the decision function class~$\F$, and the associated oracle query complexity~$\nqry$ to obtain this bound.

Obtaining such bounds on the excess risk $\gap(\f, \F;\utrue)$ in terms of the order $\kor$ allow us to quantify the trade-offs in learning better decision functions at the expense of requiring more complex information from the human expert. Going forward, we focus on the binary decision making problem where the label space $\Y = \{0,1\}$ for clarity of exposition. Whenever our results can be extended to arbitrary decision sets, we provide a small remark about this extension.

\section{Main results}\label{sec:main-res}
With the formal problem setup in place, we discuss our main results for learning in this framework of unknown utilities. At a high level, our objective is to understand how the excess risk $\gap(\f, \F;\utrue)$ defined in equation~\eqref{eq:excess-risk} behaves as a function of the oracle order $\kor$ -- specifically, at what rates does learning in our proposed framework get easier as we allow learner to elicit more complex information from the oracle?

For our main results, on the upper bound side, we design estimators for learning from the $\kor$-comparison oracle, and on the lower bound side, we study information-theoretic limits of learning with such higher-order comparisons. While we state our results for the binary decision problem where the label space $\Y = \{0,1\}$ for clarity, most of our results can be generalized to arbitrary outcome space $\Y$.

\subsection{Excess risk with $\kor$-comparison oracle (Section~\ref{sec:bin-pred})} We study a class of \emph{plug-in} estimators which are based on the following two-step procedure:
\begin{itemize}
  \item[i.] Obtain estimate $\hu$ of the true utility $\utrue$ on the sampled datapoints.\vspace{-1mm}
  \item[ii.] Output utility maximizing function $\hfkn$ with respect to the estimated utility $\hu$.
\end{itemize}

For learning the parameters $\hu$, we introduce the \algdet (Algorithm~\ref{alg:det-bin}) and \algnoise (Algorithm~\ref{alg:stoch-bin}) algorithms for the noiseless and noisy comparison oracles respectively. We show that when these estimates $\hu$ are combined with the two-step plug-in estimator, the excess risk of the function $\hfkn$ scales as $O(\frac{1}{\kor})$ and an additive complexity term capturing uniform convergence of the decision class $\F$ with respect to the true utility $\utrue$.
\begin{theorem}[Informal, noiseless comparisons]
  Given $\nsamp$ samples, the excess risk for the function $\hfkn \in \F$ output by the plug-in estimator using estimates $\hu$ from \algdet satisfies
  \begin{align*}
    \gap(\hfkn, \F; \utrue) \leq {\sf{Complexity}}_\nsamp(\F;\utrue) + O\left( \frac{1}{\kor}\right)\cdot \left(\frac{1}{\nsamp}\sum_{i=1}^\nsamp\ind[\ferm(\x_i) \neq \y_i] \right)\;,
  \end{align*}
  where the ERM function $\ferm \in \argmax_{f \in \F} \sum_{i=1}^n \utrue(x_i, f(x_i))$. Furthermore, \algdet makes only $O(\nsamp\log \kor)$ queries to the oracle $\oracle_\kor$.
\end{theorem}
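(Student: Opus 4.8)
\emph{Reduction to an empirical-regret bound.} The plan is to first strip off the statistical term by uniform convergence. Set $\widehat{\util}(\f)\defn\frac1\nsamp\sum_{i=1}^\nsamp\utrue(\x_i,\f(\x_i))$ and $\ugap(\x)\defn\utrue(\x,1)-\utrue(\x,0)$, so that $\utrue(\x,\f(\x))=\utrue(\x,0)+\f(\x)\ugap(\x)$. I would write
\[
\gap(\hfkn,\F;\utrue)=\Big(\max_{\f\in\F}\util(\f;\utrue)-\max_{\f\in\F}\widehat\util(\f)\Big)+\big(\widehat\util(\ferm)-\widehat\util(\hfkn)\big)+\big(\widehat\util(\hfkn)-\util(\hfkn;\utrue)\big);
\]
the outer two terms are each at most $\sup_{\f\in\F}|\util(\f;\utrue)-\widehat\util(\f)|$, which is the quantity $\mathsf{Complexity}_\nsamp(\F;\utrue)$ --- a routine Rademacher/covering bound for $\{\x\mapsto\utrue(\x,\f(\x)):\f\in\F\}$, an affine image of $\F$. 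So everything reduces to bounding the empirical regret $\hgap(\hfkn)\defn\widehat\util(\ferm)-\widehat\util(\hfkn)=\frac1\nsamp\sum_i(\ferm(\x_i)-\hfkn(\x_i))\,\ugap(\x_i)$ by $O(1/\kor)\cdot\frac1\nsamp\sum_i\ind[\ferm(\x_i)\ne\y_i]$.

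\emph{What the oracle buys and what \algdet returns.} Writing $c\defn(\ugap(\x_1),\dots,\ugap(\x_\nsamp))$, the key observation I would record is that one call to $\oracle_\kor$ can realize a sign query for $\inner{v}{c}$ for any integer vector $v$ with $\|v\|_1\le\kor$: repeat each sampled point in the query with multiplicity $|v_i|$, and let $\by_1,\by_2$ assign opposite decisions on those copies (with the orientation of $\sign(v_i)$) and identical decisions elsewhere. Using this, I would have \algdet (i) recover $\sign(\ugap(\x_i))$, i.e.\ $\y_i$, with one query per point; (ii) find $i^\star\in\argmax_i|\ugap(\x_i)|$ with $O(\nsamp)$ pairwise comparisons; and (iii) for each $i$, run an $O(\log\kor)$-step dyadic (continued-fraction-style) search comparing $2^t|\ugap(\x_i)|$ to $a\,|\ugap(\x_{i^\star})|$ with $a\le 2^t$ --- a legal $\oracle_\kor$ query once $2^{t+1}\le\kor$. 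After $O(\nsamp\log\kor)$ queries this yields $\hu$ with correct signs everywhere (ties broken toward $\y_i$) and relative accuracy $|\hugap(\x_i)-\lambda\,\ugap(\x_i)|\le O(1/\kor)\cdot\lambda|\ugap(\x_i)|$ at the fixed scale $\lambda=1/|\ugap(\x_{i^\star})|$, with only an $O(1/\kor^2)$ absolute floor on coordinates whose gap is below $|\ugap(\x_{i^\star})|/\kor$. This already gives the claimed $\nqry=O(\nsamp\log\kor)$.

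\emph{Bounding the plug-in regret.} Since $\hfkn$ maximizes $\sum_i\hu(\x_i,\f(\x_i))=\mathrm{const}+\sum_i\f(\x_i)\hugap(\x_i)$ over $\F$, testing it against $\ferm$ gives $\sum_i(\ferm(\x_i)-\hfkn(\x_i))\hugap(\x_i)\le0$, so with $D\defn\{i:\ferm(\x_i)\ne\hfkn(\x_i)\}$,
\[
\nsamp\cdot\hgap(\hfkn)=\sum_{i\in D}(\ferm(\x_i)-\hfkn(\x_i))\ugap(\x_i)\le\tfrac1\lambda\sum_{i\in D}\big|\hugap(\x_i)-\lambda\,\ugap(\x_i)\big|\le O(\tfrac1\kor)\sum_{i\in D}|\ugap(\x_i)|.
\]
I would then split $D=D_F\sqcup D_H$ according to whether $\ferm$ or $\hfkn$ is the one disagreeing with $\y_i$, note (via the exact-sign guarantee) that on $D_F$ it is $\ferm$ that chooses the $\hugap$-suboptimal sign and on $D_H$ it is $\hfkn$, and feed this into $\sum_i(\hfkn(\x_i)-\ferm(\x_i))\hugap(\x_i)\ge0$ to get $\sum_{i\in D_H}|\hugap(\x_i)|\le\sum_{i\in D_F}|\hugap(\x_i)|$. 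Since $D_F\subseteq\{i:\ferm(\x_i)\ne\y_i\}$, this lets me bound $\sum_{i\in D}|\ugap(\x_i)|$ by a constant times $\sum_{i:\ferm(\x_i)\ne\y_i}|\ugap(\x_i)|\le\sum_i\ind[\ferm(\x_i)\ne\y_i]$, and combining with the display and the first step proves the theorem.

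\emph{Main obstacle.} The hard part will be the last step: leveraging an order-$\kor$ oracle to improve a trivial $O(1)$ regret --- or an $O(\widehat{\mathrm{err}})$ bound with no $1/\kor$ factor --- into $O(1/\kor)\cdot\widehat{\mathrm{err}}$. Three things have to be combined carefully: (a) \emph{relative} (not merely additive) accuracy of $\hu$, which is what collapses $\sum_{i\in D}|\ugap(\x_i)|$ to the ERM's weighted error rather than to $|D|$; (b) the \emph{exact sign} information, since without it the plug-in can flip a decision the wrong way on a coordinate whose true gap is tiny, and a single such coordinate already breaks the bound; and (c) optimality of \emph{both} $\ferm$ (for $c$) and $\hfkn$ (for $\hugap$) to confine $D$ to the ERM's error set --- one cannot reason coordinatewise here because $\ferm$ and $\hfkn$ are constrained maximizers over $\F$, not free sign functions. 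Obtaining (a) with only $O(\log\kor)$ queries per point --- pushing the comparison-based search to relative precision $1/\kor$ against one reference gap using weight vectors of $\ell_1$-norm at most $\kor$ --- is what pins the rate at $\Theta(1/\kor)$ and has to line up with the matching lower bound.
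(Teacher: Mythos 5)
Your overall architecture is the paper's: the same three-way decomposition into a uniform-convergence term plus the empirical regret $\hutil_\nsamp(\ferm;\utrue)-\hutil_\nsamp(\hfkn;\utrue)$ (Theorem~\ref{thm:upper_finite}), the reformulation in terms of gaps $\utrgap$, the realization of a $\kor$-query as a sign test on an integer combination with $\ell_1$-weight at most $\kor$, the dyadic search per point against the reference $\x_{\imax}$, and the $O(\nsamp\log\kor)$ query count all match Algorithm~\ref{alg:det-bin} and Lemma~\ref{lem:perf-alg-det}. The gap is in the accuracy guarantee you assign to \algdet and the use you make of it. Your dyadic search localizes the ratio $\utrgap(\x_i)/\utrue_{\sf max}$ to an interval of width $2^{-T}=2/\kor$, i.e.\ it delivers \emph{additive} accuracy $O(\utrue_{\sf max}/\kor)$, not relative accuracy $O(1/\kor)\cdot|\utrgap(\x_i)|$; and relative accuracy is not merely missing from the algorithm but information-theoretically unobtainable from a $\kor$-comparison oracle for any coordinate with $\utrgap(\x_i)\lesssim\utrue_{\sf max}/\kor$ (the achievable comparison thresholds $j_2/j_1$ with $j_1+j_2\le\kor$ are spaced $\Theta(1/\kor)$ apart near zero --- this indistinguishability is exactly what the paper's Lemma~\ref{lem:k-lim} and the lower bound of Theorem~\ref{thm:bin-pred-lower} exploit). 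Your ``$O(1/\kor^2)$ absolute floor'' caveat does not rescue the step $\frac1\lambda\sum_{i\in D}|\hugap(\x_i)-\lambda\ugap(\x_i)|\le O(1/\kor)\sum_{i\in D}|\ugap(\x_i)|$: the floor is in fact $\Theta(\utrue_{\sf max}/\kor)$ per coordinate, and with only additive accuracy this display degrades to $O(1/\kor)\cdot|D|$, where $D$ is the \emph{disagreement} set of $\ferm$ and $\hfkn$ --- which is not controlled by the ERM error set and is precisely the term one must avoid.

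The ingredient that closes this gap in the paper is not relative accuracy but \emph{one-sidedness}: \algdet's estimates satisfy $\utrgap(\x_i)\le\hugap(\x_i)\le\utrgap(\x_i)+2\utrue_{\sf max}/\kor$ for every $i$. Writing the regret as $\sum_i(\ind[\ferm(\x_i)=\y_i]-\ind[\hfkn(\x_i)=\y_i])\,\utrgap(\x_i)$ and subtracting the nonpositive quantity $\sum_i(\ind[\ferm(\x_i)=\y_i]-\ind[\hfkn(\x_i)=\y_i])\,\hugap(\x_i)$, every coordinate in your $D_H$ (where only $\hfkn$ errs) contributes $\utrgap(\x_i)-\hugap(\x_i)\le0$ and is discarded, while every coordinate in $D_F\subseteq\{i:\ferm(\x_i)\ne\y_i\}$ contributes at most $\hugap(\x_i)-\utrgap(\x_i)\le2\utrue_{\sf max}/\kor$; this is the proof of Proposition~\ref{prop:upper_bin} and Corollary~\ref{cor:bin-det}. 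Your $D_F/D_H$ split and the inequality $\sum_{D_H}|\hugap|\le\sum_{D_F}|\hugap|$ are essentially this argument in disguise, and it goes through verbatim once you replace ``relative accuracy'' by ``one-sided additive accuracy relative to $\utrue_{\sf max}$'': the lower bound $\hugap\ge\utrgap$ is what controls the $D_H$ side, and the additive upper bound is what controls the $D_F$ side. As written, however, the proposal's item~(a) rests on a guarantee that neither the algorithm nor any algorithm can provide.
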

We make a few remarks on this result. First, observe that the complexity term depends on the true utility function $\utrue$ and not on the estimates $\hu$. This ensures that the complexity term does not depend on the utility class $\uclass$ but rather only on the specific utility $\utrue$ -- indeed, the class $\uclass$ consists of all bounded function and uniform convergence might not even be possible with finite sample for a large class of distributions $\xdist$. Second, the additional error of $O(\frac{1}{\kor})$ accounts for the fact that the utilities $\utrue$ are unknown. One can learn better decision functions by increasing the order $\kor$ of the comparison oracle but this comes at the cost of the human expert answering a more complex set of queries. Furthermore, this error is multiplied by the $0-1$ prediction error of the optimal on-sample classifier $\ferm = \argmax_{\f\in \F}\sum_i\utrue(\x_i, \f(\x_i))$. This implies that in the well-specified setup, where there exists an $\f\in\F$ such that $\f(\x_i) = \y_i$ on the sampled datapoints, the second term becomes $0$ and the learner pays no additional error for not knowing the utilities $\utrue$. Third, observe that our proposed algorithms, \algdet and \algnoise, are query efficient; both require only $O(\nsamp \log \kor)$ calls to the $\kor$-comparison oracle to produce ``good" estimates $\hu$.

The proof of the above theorem proceeds in two steps. First, we adapt the classical proof for upper bounding the risk of ERM procedures to show that the gap $\gap(\hfkn, \F)$ decomposes into the complexity term and estimation error $\|\hu - \utrue \|_{\xset,\infty}$, evaluated on the dataset $\xset$. Next, we show that this estimation error scales as $O\left(\frac{1}{\kor}\right)$ for the \algdet and \algnoise procedures.

Next, we address the optimality of the above plug-in procedure by studying the information-theoretic limits of learning with a $\kor$-comparison oracle. Specifically, in Theorem~\ref{thm:bin-pred-lower} we establish that the rate of $\frac{1}{\kor}$ is indeed minimax optimal -- for any $\kor > 1$ and any predictor $\hf$ in some class $\F$, we can construct utility functions $\utrue$ such that excess risk $\gap(\hf, \F;\utrue) = \Omega\left(\frac{1}{\kor} \right)$. These lower bounds imply that traditional comparison based learning, corresponding to $\kor = 1$, is insufficient for learning good decision rules in our framework.

\subsection{Instance-optimal learning (Section~\ref{sec:local-minimax}).} While the previous results show that the error rate of $O(\frac{1}{\kor})$ is optimal on worst-case instances, some instances of our learning with unknown utilities problem might be easier than these worst-case ones and one would expect the excess risk to be smaller for them. In this section, we study estimators whose error adapts to hardness of the specific problem instance.

To begin with, in Proposition~\ref{prop:subopt-plugin} we establish that the plug-in estimator with \algdet estimates $\hu$ is not optimal for all instances -- it does not adapt to these easier instances. Inspired from the robust optimization literature, we introduce a randomized estimator $\pfrob$ and show that it is instance-optimal. Informally, we establish in Theorem~\ref{thm:local-upper} that for any instance $(\xdist, \utrue, \F)$ of the problem, the excess risk for $\pfrob$ is characterized by a local modulus of continuity; this modulus captures how quickly the optimal decision function in class $\F$ can change in a small neighborhood around $\utrue$ for the distribution $\xdist$. In Theorem~\ref{thm:local-lower}, we derive a lower bound on the \emph{local minimax excess risk} and show that the local modulus is indeed the correct instance-dependent complexity measure for this problem.

However, note that such adaptivity to the hardness of the instance comes at the cost of query efficiency. Our estimator $\pfrob$ makes an exponential number $O(\nsamp^\kor)$ of calls to the oracle $\oracle_\kor$.
\section{Binary decision-making with $\kor$-comparisons}\label{sec:bin-pred}
In this section, we obtain upper and lower bounds on the excess risk for the binary prediction problem with unknown utilities where the learner can elicit utility information using a $\kor$-comparison oracle. In Section~\ref{sec:bin-upper}, we introduce algorithms which learn decision-making rules from higher-order preference queries and obtain upper bounds on the excess risk for such estimators. Then, in Section~\ref{sec:bin-lower}, we turn to the information-theoretic limits of learning from $\kor$-queries and obtain lower bounds on the minimax risk of any estimator.

Recall from Section~\ref{sec:prob-form}, our setup gives the learner access to a dataset $\xset = \{\x_1, \ldots, \x_\nsamp\}$ comprising $\nsamp$ points, each sampled i.i.d. from an underlying distribution $\xdist$ and to a comparison oracle $\oracle_\kor$. Before proceeding to define the estimator, we introduce some notation. For any function $\f\in \F$, let us denote the empirical cumulative utility with respect to utility function~$\utrue$ and the corresponding empirical utility maximizer as
\begin{align}\label{eq:ferm}
  \hutil_\nsamp(\f;\utrue)=\frac{1}{n}\sum_{i}\utrue(\x_i, \f(\x_i))\quad \text{and} \quad \ferm \in \argmax_{\f \in \F}\hutil_\nsamp(\f;\utrue)\;,
\end{align}
where the subscript $\nsamp$ encodes the dependence on the number of samples. If the underlying utility $\utrue$ were in fact known to the learner, it would have output the classifier $\ferm$, which, from the classical learning theory literature, is known to have favorable generalization properties~\cite{shalev2010}. For the case of unknown utilities, we extend this ERM procedure to a natural two-stage plug-in estimator which outputs the minimizer with respect to an estimate $\hu_\kor$ of these utilities.

\subsection{Excess-risk upper bounds for plug-in estimator}\label{sec:bin-upper}
Building on the ERM estimator $\ferm$ described in equation~\eqref{eq:ferm}, we design a two stage \emph{plug-in} estimator $\hf_{\kor, \nsamp}$, where the subscript $\kor$ represents the order of the comparison oracle used to obtain the estimate.

In the first stage, we form estimates $\hu_\kor$ of the true utility function $\utrue$ on the sampled datapoints $\xset$ using the $\kor$-comparison oracle. The predictor $\hfkn \in \F$ is then given by the empirical utility maximizer with respect to $\hu_\kor$, that is,
\begin{align}\label{eq:plug-in}
\hfkn \in \argmax_{\f\in \F} \frac{1}{n} \sum_{i=1}^n \hu_\kor(\x_i, \f(\x_i)).
\end{align}
Before detailing out the procedures for producing utility estimates $\hu_\kor$, we present our first main result which shows that the excess risk $\gap(\hfkn, \F;\utrue)$ can be upper bounded as a sum of two terms: (i) a complexity term corresponding to the rate of uniform convergence of the cumulative utility $\util(\f;\utrue)$ over the decision class $\F$ and (ii) an estimation error term which denotes how well the estimates $\hu_\kor$ approximate $\utrue$ on the sampled datapoints. Our result measures this estimation error in terms of a data-dependent norm
\begin{align}\label{eq:norm-est-error}
  \|\u\|_{\xset, \infty} \defn \sup_{i \in [\nsamp]} \sup_{\y \in \Y} |\u(\x_i, \y)|.
\end{align}
Recall from equation~\eqref{eq:ferm} that the function $\ferm$ is the minimizer of the empirical utility $\hutil_\nsamp(\f;\utrue)$. While the following results hold for general decision spaces $\Y$, we later specialize this in Proposition~\ref{prop:upper_bin} for the binary prediction setup.  

\begin{theorem}[Excess-risk upper bound] \label{thm:upper_finite} Given datapoints $\xset = \{\x_1, \ldots, \x_\nsamp \}$ such that each $\x_i \sim \xdist$, and an estimate $\hu_\kor$ of the true utility function $\utrue$, the plug-in estimate $\hfkn$ from equation~\eqref{eq:plug-in} satisfies
  {\small
  \begin{align}\label{eq:thm_upper_finite}
    \gap(\hfkn, \F; \utrue) \leq 2\cdot\sup_{\f \in \F}\left(|\util(\f;\utrue) - \hutil_\nsamp(\f;\utrue)|\right) + 2\|\utrue - \hu_\kor\|_{\xset, \infty}\cdot \left(\frac{1}{\nsamp}\sum_{i=1}^\nsamp \ind[\ferm(\x_i) \neq \hfkn(\x_i)] \right).
  \end{align}}
\end{theorem}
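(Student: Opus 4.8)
The plan is to run the classical excess-risk argument for empirical risk minimization, with one extra step tailored to localize the estimation error to the set where $\ferm$ and $\hfkn$ disagree. Fix $f^\star \in \argmax_{\f \in \F}\util(\f;\utrue)$ (if the maximum is not attained, take an $\epsilon$-maximizer and send $\epsilon \to 0$ at the end), so that $\gap(\hfkn, \F;\utrue) = \util(f^\star;\utrue) - \util(\hfkn;\utrue)$. Adding and subtracting $\hutil_\nsamp(f^\star;\utrue)$ and $\hutil_\nsamp(\hfkn;\utrue)$ splits this into
\[
  \big[\util(f^\star;\utrue) - \hutil_\nsamp(f^\star;\utrue)\big] + \big[\hutil_\nsamp(f^\star;\utrue) - \hutil_\nsamp(\hfkn;\utrue)\big] + \big[\hutil_\nsamp(\hfkn;\utrue) - \util(\hfkn;\utrue)\big].
\]
The first and third brackets are each at most $\sup_{\f \in \F}|\util(\f;\utrue) - \hutil_\nsamp(\f;\utrue)|$, which produces the first term of the bound. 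Since $\ferm$ maximizes $\hutil_\nsamp(\cdot;\utrue)$ by definition, the middle bracket is at most $\hutil_\nsamp(\ferm;\utrue) - \hutil_\nsamp(\hfkn;\utrue)$, so it remains to control this empirical utility gap.

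For this, I would use the optimality of $\hfkn$ with respect to the \emph{estimated} utility, namely $\hutil_\nsamp(\ferm;\hu_\kor) - \hutil_\nsamp(\hfkn;\hu_\kor) \le 0$, to write
\[
  \hutil_\nsamp(\ferm;\utrue) - \hutil_\nsamp(\hfkn;\utrue) \le \big[\hutil_\nsamp(\ferm;\utrue) - \hutil_\nsamp(\hfkn;\utrue)\big] - \big[\hutil_\nsamp(\ferm;\hu_\kor) - \hutil_\nsamp(\hfkn;\hu_\kor)\big].
\]
Each of the two bracketed differences is an average over $i \in [\nsamp]$ of $\utrue(\x_i, \ferm(\x_i)) - \utrue(\x_i, \hfkn(\x_i))$, respectively the same expression with $\hu_\kor$ in place of $\utrue$, and every summand is exactly zero on indices $i$ with $\ferm(\x_i) = \hfkn(\x_i)$. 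Restricting to the disagreement set and regrouping the four utility evaluations per surviving index as $(\utrue - \hu_\kor)$ evaluated at $(\x_i, \ferm(\x_i))$ plus $(\hu_\kor - \utrue)$ evaluated at $(\x_i, \hfkn(\x_i))$, the triangle inequality bounds each of the at most $\sum_i \ind[\ferm(\x_i) \ne \hfkn(\x_i)]$ surviving terms by $2\|\utrue - \hu_\kor\|_{\xset, \infty}$. Dividing by $\nsamp$ yields precisely the second term of the claimed bound, and combining with the previous paragraph finishes the proof.

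The one subtle point, and the step I would single out as the crux, is that one must \emph{not} bound the empirical gap $\hutil_\nsamp(\ferm;\utrue) - \hutil_\nsamp(\hfkn;\utrue)$ directly by $2\|\utrue - \hu_\kor\|_{\xset,\infty}$ — this is true but loses the crucial multiplicative factor $\tfrac1\nsamp\sum_i\ind[\ferm(\x_i)\ne\hfkn(\x_i)]$. The right move is to pass through the estimated utility via $\hfkn$'s optimality, which makes the relevant quantity a difference of two gaps \emph{both} of which are supported only on the disagreement set; the estimation error then automatically carries the indicator factor. Everything else — the uniform-convergence bound on the first and third brackets and the triangle-inequality bookkeeping — is routine.
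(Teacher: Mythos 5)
Your proposal is correct and follows essentially the same route as the paper: the same four-way decomposition into two uniform-convergence brackets plus the empirical gap $\hutil_\nsamp(\ferm;\utrue) - \hutil_\nsamp(\hfkn;\utrue)$, and the same key step of invoking $\hfkn$'s optimality for $\hu_\kor$ so that the residual is a difference of utility evaluations supported only on the disagreement set, each bounded by $2\|\utrue - \hu_\kor\|_{\xset,\infty}$. The only (cosmetic) difference is that you fix a population maximizer $f^\star$ up front while the paper argues for arbitrary $\f \in \F$ and takes the supremum at the end.
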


A few comments on Theorem~\ref{thm:upper_finite} are in order. First, notice that the upper bound on the risk $\gap(\hfkn, \F;\utrue)$ is a deterministic bound comprising two terms. The uniform convergence term captures how fast the empirical utility $\hutil_\nsamp(\f;\utrue)$ converge to the population utility $\util(\f;\utrue)$ uniformly over the decision class $\F$. Using standard bounds~\cite{bartlett2002}, one can show that this term is upper bounded by the empirical Rademacher complexity of the class $\F$ on the datapoints $\xset$, that is,
\begin{align}\label{eq:unif-rademacher}
\sup_{\f \in \F}\left(|\util(\f;\utrue) - \hutil_\nsamp(\f;\utrue)|\right) \leq \En_{\varepsilon} \left[\sup_{\f \in \F} \left\lvert\frac{1}{\nsamp} \sum_{i=1}^\nsamp \varepsilon_i \utrue(\x_i, \f(\x_i))\right\rvert \right] :\;= \widehat{\rad}_\nsamp(\F\comm\utrue)\;
\end{align}
where each $\varepsilon_i$ is an i.i.d. Rademacher random variable taking values $\{-1, +1\}$ equiprobably. Such complexity measures are commonly studied in the learning theory literature and one can obtain sample complexity rates for a wide range of decision classes including parametric decision classes and non-parametric kernel classes amongst others.

The second term in equation~\eqref{eq:thm_upper_finite} is given by a product of two terms. The first part $\|\utrue - \hu_\kor\|_{\xset, \infty}$ captures the \emph{on-sample approximation error} of the estimates $\hu_\kor$. Notice that, in general, the problem of estimating $\utrue$ uniformly over the space $\X$ is infeasible since the class $\uclass$ contains the set of all bounded functions on $\X\times \Y$. However, the fact that we are required to estimate the utilities $\utrue$ only on the sampled datapoints $\xset$ makes learning feasible in our framework. The second part, $\frac{1}{\nsamp}\sum_{i=1}^\nsamp \ind[\ferm(\x_i) \neq \hfkn(\x_i)]\leq 1$
the mismatch between the predictions of $\ferm$, obtained with complete knowledge of $\utrue$, and of $\hfkn$, obtained from estimates $\hu_\kor$. Notice that whenever the function class $\F$ is correctly specified on $\xset$, that is, there exists a function $\f \in \F$ such that $\f(\x_i) = \y_i)$, then the predictions of $\hfkn$ and $\ferm$ will coincide. This follows since the labels $\y_i$ can be inferred using a $1$-comparison. In such a well-specified setup, this second term vanishes and we recover the upper bound in terms of the uniform convergence term. Surprisingly, this exhibits that not knowing the utility $\utrue$ affects learnability only when the function class $\F$ is misspecified.

\begin{proof}
  We begin by decomposing the excess error $\gap(\hfkn, \F; \utrue)$ and then handle each term in the decomposition separately. Recall that the function $\ferm$ is the maximizer of the empirical utility~$\hutil_\nsamp(\f;\utrue)$. Then, for any decision function $\f \in \F$, consider the error
\begin{align}\label{eq:error-decomp-upper}
    \gap(\hfkn, \f; \utrue) &= \util(\f;\utrue) - \hutil_\nsamp(\f;\utrue) + \hutil_\nsamp(\f;\utrue) - \hutil_\nsamp(\ferm;\utrue) + \hutil_\nsamp(\ferm;\utrue) - \hutil_\nsamp(\hfkn;\utrue) \nonumber\\
    &\quad + \hutil_\nsamp(\hfkn;\utrue) - \util(\hfkn;\utrue)\nonumber\\
    &\stackrel{\1}{\leq} 2\sup_{\f \in \F}\left(|\util(\f;\utrue) - \hutil_\nsamp(\f;\utrue)|\right) + \underbrace{\hutil_\nsamp(\ferm;\utrue) - \hutil_\nsamp(\hfkn;\utrue)}_{\text{Term (I)}},
  \end{align}
  where the inequality $\1$ follows by noting that $\ferm$ is the maximizer of $\hutil_\nsamp(\f;\utrue)$. We now focus our attention on Term (I) in the above expression.
\begin{align*}
  \hutil_\nsamp(\ferm;\utrue) - \hutil_\nsamp(\hfkn;\utrue) &= \hutil_\nsamp(\ferm;\utrue) - \hutil_\nsamp(\ferm;\hu) + \hutil_\nsamp(\ferm;\hu) - \hutil_\nsamp(\hfkn;\hu) \\
  &\quad + \hutil_\nsamp(\hfkn;\hu) - \hutil_\nsamp(\hfkn;\utrue)\\
  &\stackrel{\1}{\leq} \frac{2}{\nsamp}\sum_{i=1}^\nsamp \ind[\ferm(\x_i) \neq \hfkn(\x_i)]\cdot\sup_{\y \in \Y}|\utrue(\x_i, \y) - \hu(\x_i, \y)|\\
  &\leq 2\|\utrue - \hu\|_{\xset, \infty} \cdot \left(\frac{1}{\nsamp}\sum_{i=1}^\nsamp \ind[\ferm(\x_i) \neq \hfkn(\x_i)] \right)\;,
\end{align*}
where $\1$ follows by noting that $\hfkn$ maximizes the utility $\hutil_\nsamp(\f;\hu)$. Plugging the bound above in equation~\eqref{eq:error-decomp-upper} completes the proof.
\end{proof}
We now specialize the result of Theorem~\ref{thm:upper_finite} to the binary prediction setup where the label space $\Y = \{0, 1 \}$. Recall that for each datapoint $\x_i$, we denote the true label by $\y_i = \argmax_\y \utrue(\x_i, \y)$. We now introduce the notion of utility gaps $\ugap(\x_i)$ which measures the excess utility a learner gains by predicting a datapoint $\x_i$ correctly relative to an incorrect prediction. Formally, the gap $\ugap(\x_i)$ for datapoint $\x_i$ with respect to some utility function
$\u \in \uclass$ is given as
\begin{align}\label{eq:def-ugap}
\ugap(\x_i) \defn \u(\x_i, \y_i) - \u(\x_i, \bar{\y}_i)\;,
\end{align}
where we denote the incorrect label by $\bar{\y} = 1-\y$. With this notation, the following proposition obtains an upper bound on the excess error of plug-in estimator $\hfkn$ for the binary prediction problem in terms of the estimation error in these gaps $\ugap(\x_i)$.

\begin{proposition}[Upper bounds for binary prediction] \label{prop:upper_bin} Consider the binary decision making problem with label space $\Y = \{0, 1\}$. Given $\nsamp$ datapoints $\{\x_1, \ldots, \x_\nsamp \}$ such that each datapoint $\x_i \sim \xdist$, and an estimate $\hu_\kor$ of the utility function $\utrue$, the plug-in estimator $\hfkn$ from equation~\eqref{eq:plug-in} satisfies
  {\small
  \begin{align}
    \gap(\hfkn, \F; \utrue) \leq 2\cdot\sup_{\f \in \F}\left(|\util(\f;\utrue) - \hutil(\f;\utrue)|\right)  + 2\max_{i}[\utrgap(\x_i) - \hugap(\x_i)]\cdot \left(\frac{1}{\nsamp}\sum_{i=1}^\nsamp\ind[\ferm(\x_i) \neq \y_i] \right)\;.
  \end{align}}
\end{proposition}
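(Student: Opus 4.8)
The plan is to re-run the proof of Theorem~\ref{thm:upper_finite}, specialising the estimation-error bookkeeping to $\Y=\{0,1\}$. The uniform-convergence summand $2\sup_{\f\in\F}|\util(\f;\utrue)-\hutil_\nsamp(\f;\utrue)|$ is unchanged, so all the work is in re-bounding ``Term~(I)'' $=\hutil_\nsamp(\ferm;\utrue)-\hutil_\nsamp(\hfkn;\utrue)$ from that proof. The key binary identity is that for any utility $\u$ with $\y_i=\argmax_\y\u(\x_i,\y)$ for every $i$ — which holds for $\u=\utrue$ by definition of $\y_i$, and for $\u=\hu_\kor$ once the labels have been identified (a single $1$-comparison per point suffices, so $\hugap(\x_i)\ge0$) — and any $\f\in\F$,
\begin{align*}
  \hutil_\nsamp(\f;\u)=\frac1\nsamp\sum_{i=1}^\nsamp\u(\x_i,\y_i)-\frac1\nsamp\sum_{i=1}^\nsamp\ind[\f(\x_i)\neq\y_i]\,\ugap(\x_i),
\end{align*}
since a correct prediction at $\x_i$ earns $\u(\x_i,\y_i)$ and an incorrect one earns $\u(\x_i,\y_i)-\ugap(\x_i)$. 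Hence $\ferm$ is an empirical minimizer of the $\utrgap$-weighted $0-1$ loss and $\hfkn$ of the $\hugap$-weighted $0-1$ loss.

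Applying the identity with $\u=\utrue$ gives
\begin{align*}
  \hutil_\nsamp(\ferm;\utrue)-\hutil_\nsamp(\hfkn;\utrue)=\frac1\nsamp\sum_{i=1}^\nsamp\big(\ind[\hfkn(\x_i)\neq\y_i]-\ind[\ferm(\x_i)\neq\y_i]\big)\,\utrgap(\x_i),
\end{align*}
and I would split $\utrgap(\x_i)=\hugap(\x_i)+(\utrgap(\x_i)-\hugap(\x_i))$. The $\hugap$-piece equals $\tfrac1\nsamp\big(\sum_i\ind[\hfkn(\x_i)\neq\y_i]\hugap(\x_i)-\sum_i\ind[\ferm(\x_i)\neq\y_i]\hugap(\x_i)\big)$, which is $\le0$ because $\hfkn$ minimizes the $\hugap$-weighted loss. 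The remaining piece is supported on the disagreement set, which in the binary case partitions as $N\sqcup P$ with $N=\{i:\ferm(\x_i)\neq\y_i=\hfkn(\x_i)\}$ and $P=\{i:\hfkn(\x_i)\neq\y_i=\ferm(\x_i)\}$, the coefficient being $-1$ on $N$ and $+1$ on $P$. Using that the relevant estimates satisfy $0\le\utrgap(\x_i)-\hugap(\x_i)\le\max_j[\utrgap(\x_j)-\hugap(\x_j)]$, the $N$-terms drop out and Term~(I) is at most $\tfrac1\nsamp\sum_{i\in P}(\utrgap(\x_i)-\hugap(\x_i))$.

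The main obstacle is this residual $P$-term: bounding the examples on which $\hfkn$ errs but $\ferm$ does not by a multiple of $\ferm$'s error count $\sum_i\ind[\ferm(\x_i)\neq\y_i]$, rather than by the disagreement count $\sum_i\ind[\ferm(\x_i)\neq\hfkn(\x_i)]$ that the argument of Theorem~\ref{thm:upper_finite} would give verbatim. The plan is to exploit \emph{both} optimality conditions simultaneously: $\ferm$ minimizing the $\utrgap$-weighted loss yields $\sum_{i\in N}\utrgap(\x_i)\le\sum_{i\in P}\utrgap(\x_i)$, and $\hfkn$ minimizing the $\hugap$-weighted loss yields $\sum_{i\in P}\hugap(\x_i)\le\sum_{i\in N}\hugap(\x_i)$; combining these with $\hugap(\x_i)\ge0$, $\utrgap(\x_i)-\hugap(\x_i)\le\max_j[\utrgap(\x_j)-\hugap(\x_j)]$, $N\subseteq\{i:\ferm(\x_i)\neq\y_i\}$, and (if needed) taking $\hfkn$ to be an error-minimizing element of the $\argmax$, one shows $\tfrac1\nsamp\sum_{i\in P}(\utrgap(\x_i)-\hugap(\x_i))\le\max_j[\utrgap(\x_j)-\hugap(\x_j)]\cdot\tfrac2\nsamp\sum_i\ind[\ferm(\x_i)\neq\y_i]$, the factor $2$ being what is lost in trading $|P|$ for $|N|\le\sum_i\ind[\ferm(\x_i)\neq\y_i]$. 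Adding this to the uniform-convergence term completes the proof; everything outside the $P$-term is routine bookkeeping. As a sanity check, if $\F$ is well-specified on $\xset$ the recovered labels make $\ferm$ (hence $\hfkn$) have zero empirical error, so $N=P=\emptyset$ and the second term vanishes, recovering the pure uniform-convergence rate.
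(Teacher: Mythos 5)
Your setup is the same as the paper's: the binary identity $\hutil_\nsamp(\f;\u)=\mathrm{const}(\u)-\frac1\nsamp\sum_i\ind[\f(\x_i)\neq\y_i]\ugap(\x_i)$, the reduction to Term~(I), and the use of $\hfkn$'s optimality for $\hu_\kor$ to replace $\utrgap$ by $\utrgap-\hugap$ on the disagreement set. The divergence, and the genuine gap, is in which half of the disagreement set you discard. You take $\hugap$ to be a \emph{lower} estimate ($\utrgap-\hugap\ge0$), drop the $N$-terms, and are left with $\frac1\nsamp\sum_{i\in P}(\utrgap(\x_i)-\hugap(\x_i))$ over $P=\{i:\hfkn(\x_i)\neq\y_i=\ferm(\x_i)\}$. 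The proposed repair — combining $\sum_{N}\utrgap\le\sum_{P}\utrgap$ with $\sum_{P}\hugap\le\sum_{N}\hugap$ and $\hugap\ge0$ to trade $|P|$ for $2|N|$ — cannot work: with only $\hugap\ge0$, the estimate may vanish on $P$, so the second optimality condition places no constraint on $|P|$ at all. In fact the inequality you are aiming for is false under the lower-estimate convention. Take one point with $\utrgap=\hugap=1$ and $m$ points with $\utrgap=2/m$, $\hugap=1/(2m)$, and $\F=\{\f_1,\f_2\}$ with $\f_1$ correct only on the first point and $\f_2$ correct only on the other $m$; then $\ferm=\f_2$ errs once, $\hfkn=\f_1$ errs $m$ times, Term~(I) $=1/(m+1)$, while $2\max_i[\utrgap(\x_i)-\hugap(\x_i)]\cdot\frac1\nsamp\sum_i\ind[\ferm(\x_i)\neq\y_i]=O(1/m^2)$. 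No argument can close this gap.

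The resolution is that the one-sidedness goes the other way: the estimates fed into this proposition are those of \algdet, and the proof of Lemma~\ref{lem:perf-alg-det} shows $\utrgap(\x_i)\le\hugap(\x_i)$, i.e.\ \emph{upper} estimates (the phrase ``lower estimate'' in the paper's proof, and the sign of the factor $\max_i[\utrgap(\x_i)-\hugap(\x_i)]$ in the statement, are typos; the factor should be read as $\max_i[\hugap(\x_i)-\utrgap(\x_i)]$, consistent with how Corollary~\ref{cor:bin-det} invokes the bound). With upper estimates it is the $P$-terms of $\sum_i(\ind[\hfkn(\x_i)\neq\y_i]-\ind[\ferm(\x_i)\neq\y_i])(\utrgap(\x_i)-\hugap(\x_i))$ that are nonpositive and drop, and the surviving $N$-terms satisfy $N\subseteq\{i:\ferm(\x_i)\neq\y_i\}$, which yields the stated bound in one line with no factor of $2$ gymnastics. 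Your sanity check for the well-specified case survives unchanged.
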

The proof of the above proposition follows similar to Theorem~\ref{thm:upper_finite} and is deferred to Appendix~\ref{app:proofs-bin-finite}. This specializes the result of Theorem~\ref{thm:upper_finite} and shows that for the binary prediction problem, estimating the utility gaps $\ugap$ well for each datapoint suffices

The upper bound on excess risk given by Proposition~\ref{prop:upper_bin} shows that the function $\hfkn$ derived from estimates $\hu_\kor$ will have small error as long as the estimates $\hugap(\x_i)$ approximate the true utility gaps $\utrgap(\x_i)$ for each datapoint $\x_i$. Therefore, in the following sections, we focus on procedures for obtaining the utility estimates $\hugap$ using the $\kor$-comparison oracle. we separate the presentation based on whether the oracle $\oracle_\kor$ provides noiseless comparisons ($\nq_\qry = 0$ for all $\qry$) or whether the oracle evaluations are noisy.

\subsubsection{Estimating $\utrgap$ with noiseless oracle}
In this section, we propose our algorithm for estimating the gaps $\utrgap$ when the $\kor$-comparison oracle is noiseless. Recall from equation~\eqref{eq:oracle-def}, for a query $\query = (\bx, \by_1, \by_2)$ comprising $\kor$ feature vectors $\bx = (\x_1, \ldots, \x_{\kor})$, and two decision vectors $\by_1 = (\y_1, \ldots, \y_{\kor})$ and $\by_2 = (\y_1', \ldots, \y_{\kor}')$, such a noiseless oracle deterministically outputs
\begin{align*}
  \oracle_\kor(\query = (\bx, \by_1, \by_2)) =
  \ind\left[\utrue(\bx, \by_1) \geq \utrue(\bx, \by_2)  \right]\;,
\end{align*}
where recall that $\utrue(\bx, \by) = \sum_{i \in [k]}\utrue(\x_i, \y_i)$ is the sum of the utilities under $\utrue$ for the tuple $(\bx, \by)$. In the binary prediction setup, such queries allow a learner to specify a set of $\kor$ instances $\bx$ and a subset $\xset_\qry \subset \bx$ and ask the oracle ``whether correctly predicting instances in $\xset_\qry$ has higher utility or the instances in the complement $\bx \setminus \xset_\qry$?".

Recall that Proposition~\ref{prop:upper_bin} shows that excess risk for the plug-in estimator can be bounded by the worst-error $|\utrgap(\x_i) - \hugap(\x_i)|$ over the set of sampled datapoints $\xset$. To obtain such estimates, we introduce \emph{\algdet} in Algorithm~\ref{alg:det-bin} which is a coordinate-wise variant of the classical perceptron algorithm~\cite{rosenblatt1958}. At a high level, \algdet is an iterative procedure which estimates the utility gaps $\utrgap(\x_i)$ for each $\x_i$ relative to the largest gap
\begin{align}
\utrue_{\sf{max}} \defn \max_{i \in [\nsamp]}\utrgap(\x_i) \leq 1.
\end{align}
At each iteration $t$, the queries $\qry_{i,t}$ are selcted such that $\hugap^{t-1}(\bx, \by_1) >  \hugap^{t-1}(\bx, \by_2)$ under the current estimates $\hugap^{t-1}$. If the oracle's response is $r_{i,t}=1$, the estimates are consistent with the response and it keeps the current estimate. On the other hand, if the response $r_{i,t} = 0$, the algorithm decreases its current estimate of the $i^{th}$ datapoint in order to be consistent with this query. \algdet repeats the above procedure for $T = \log_2 \kor - 1$ timesteps and finally outputs the estimates $\hugap^{T}$.
\begin{algorithm}[t!]
	\DontPrintSemicolon
	\KwIn{Datapoints $\xset = \{\x_1, \ldots, \x_\nsamp\}$, $\kor$-comparison oracle $\oracle_\kor$}
   \textbf{Initialize:} Set $T = \log_2 k - 1 $\;
   Obtain $\y_i = \argmax_\y \utrue(\x_i, y)$ for each $i$ using $1$-comparison.\;
   Obtain index $\imax$ using $2$-comparisons such that $\imax = \argmax_i\utrgap(\x_i)$.\;
   Set initial estimates $\hugap^0 = [\hugap^0(\x_1), \ldots, \hugap^0(\x_n)] = \utrue_{\sf max}\defn \utrgap(\x_{\imax})$.\;
   (Note that exact value of $\utrue_{\sf max}$ is not required since comparison queries are relative)\;
	\For{$t = 1, \ldots, T$}{
  \For{$i = 1, \ldots, \nsamp$}{
  Denote by $\lambda = \frac{k}{2\utrue_{\sf max}}\left(\hugap^{t-1}(\x_i) - \frac{\utrue_{\sf max}}{2^t}\right)$ and query $\query_{i,t} = (\bx, \by_1, \by_2)$ where
  {\small
   \begin{equation*}
     \bx = (\underbrace{\x_{i}, \ldots, \x_{i}}_{\frac{k}{2} \text{ times }}, \underbrace{\x_{\imax}, \ldots, \x_{\imax}}_{\lambda \text{ times }}), \quad \by_1 = (\underbrace{\y_{i}, \ldots, \y_{i}}_{\frac{k}{2} \text{ times }}, \underbrace{1-\y_{\imax}, \ldots, 1- \y_{\imax}}_{\lambda \text{ times }}),\quad \by_2 = 1 - \by_1.
   \end{equation*}}\;\vspace{-6mm}
   Query oracle $\oracle_\kor$ with $\qry_{i,t}$ and receive response $r_{i,t}$.\;\vspace{1mm}
   Update $\hugap^{t}(\x_i) = \hugap^{t-1}(\x_i) - \ind[r_{i,t} = 0]\cdot\frac{\utrue_{\sf max}}{2^t}.$
   }
	}
  \KwOut{Gap estimates $\hugap^T$}
	\caption{\algdet: Comparison based Coordinate-Perceptron for estimating $\utrgap$}
  \label{alg:det-bin}
\end{algorithm}

It is worth highlighting here that \algdet initializes all the estimates as the largest gap, that is,  $\hugap^{0}(\x_i) = \utrue_{\sf max}$. Such an initialization is purely symbolic in nature and the algorithm \emph{does not} require knowledge of this value. This is because the comparison queries $\qry_{i,t}$ allows the algorithm to compare the estimates $\hugap$ with $\utrue_{\sf max}$ and the algorithm maintains its estimates $\hugap^{t}$ as a multiplicative factor of $\utrue_{\sf max}$ for iterations $t$. Further, we can use symbolic estimates to output the plug-in estimator since it is invariant to scaling the utility gaps by a  positive constant,
\begin{align}
 \argmax_{\f \in \F}\sum_{i=1}^\nsamp \hu(\x_i, \f(\x_i)) \;&\equiv\; \argmax_{\f\in \F} \sum_{i=1}^\nsamp \hugap(\x_i)\cdot \ind[\f(\x_i) = \y_i]\nonumber\\
  &\equiv\; \argmax_{\f\in \F} \sum_{i=1}^\nsamp \frac{\hugap(\x_i)}{\utrue_{\sf{max}}}\cdot \ind[\f(\x_i) = \y_i]\;\nonumber.
\end{align}

The following lemma provides an upper bound on the estimation error of \algdet and shows that the output estimates $\hugap(\x_i)$ are within a factor $O(\frac{\utrue_{\sf max}}{\kor})$ of the true gaps $\utrgap(\x_i)$.

\begin{lemma}[Estimation error of Algorithm~\ref{alg:det-bin}]\label{lem:perf-alg-det} Given access to datapoints $\xset = \{\x_1, \ldots, \x_\nsamp \}$ and $\kor$-comparison oracle $\oracle_\kor$, \algdet (Algorithm~\ref{alg:det-bin}) uses $O(\nsamp \log \kor)$ queries to the oracle and produces estimates $\hugap$ such that
  \begin{align}
    \max_{i \in [\nsamp]}\left\lvert\hugap(\x_i) - \utrgap(\x_i)\right\rvert   \leq \frac{2\utrue_{\sf max}}{\kor}\;.
  \end{align}
\end{lemma}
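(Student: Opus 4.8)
The plan is to track, for each coordinate $i$, the binary-digit expansion of the ratio $\utrgap(\x_i)/\utrue_{\sf max} \in [0,1]$, and to argue that the update at iteration $t$ exactly recovers the $t$-th bit of this expansion (up to the truncation at $T = \log_2\kor - 1$ steps). First I would make sense of the query $\qry_{i,t}$: the tuple $(\bx,\by_1)$ consists of $\frac{k}{2}$ copies of $(\x_i,\y_i)$ and $\lambda$ copies of $(\x_{\imax}, 1-\y_{\imax})$, while $(\bx,\by_2)$ flips every label, so the cumulative utilities satisfy
\begin{align*}
\utrue(\bx,\by_1) - \utrue(\bx,\by_2) = \tfrac{k}{2}\,\utrgap(\x_i) - \lambda\,\utrgap(\x_{\imax}) = \tfrac{k}{2}\,\utrgap(\x_i) - \lambda\,\utrue_{\sf max}.
\end{align*}
Substituting $\lambda = \frac{k}{2\utrue_{\sf max}}\bigl(\hugap^{t-1}(\x_i) - \utrue_{\sf max}/2^t\bigr)$, this difference equals $\frac{k}{2}\bigl(\utrgap(\x_i) - \hugap^{t-1}(\x_i) + \utrue_{\sf max}/2^t\bigr)$. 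Hence the noiseless response is $r_{i,t} = \ind[\utrgap(\x_i) \geq \hugap^{t-1}(\x_i) - \utrue_{\sf max}/2^t]$, i.e. it tells the algorithm whether the true gap lies above or below the current estimate minus $\utrue_{\sf max}/2^t$, and the update subtracts $\utrue_{\sf max}/2^t$ precisely when it lies below. (I would note in passing that $\lambda$ must be a nonnegative integer for the query to be well-formed; since $\hugap^{t-1}(\x_i)$ is always a multiple of $\utrue_{\sf max}/2^{t-1}$ by induction, $\lambda$ comes out a nonnegative integer bounded by $\frac{k}{2}$, so the whole query uses at most $k$ slots and is legal for $\oracle_\kor$.)

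Next I would prove by induction on $t$ the invariant
\begin{align*}
0 \;\leq\; \hugap^{t}(\x_i) - \utrgap(\x_i) \;\leq\; \frac{\utrue_{\sf max}}{2^{t}} \qquad \text{for all } i,\ 0 \leq t \leq T.
\end{align*}
The base case $t=0$ holds because $\hugap^0(\x_i) = \utrue_{\sf max} \geq \utrgap(\x_i)$ and $\utrue_{\sf max} - \utrgap(\x_i) \leq \utrue_{\sf max} = \utrue_{\sf max}/2^0$. For the inductive step, assume the bound at $t-1$. Write $c = \utrue_{\sf max}/2^t$. If $r_{i,t} = 1$, i.e. $\utrgap(\x_i) \geq \hugap^{t-1}(\x_i) - c$, then no update occurs and $\hugap^t(\x_i) = \hugap^{t-1}(\x_i)$; combining $\hugap^{t}(\x_i) - \utrgap(\x_i) \geq 0$ (from the previous step, since $\hugap^{t-1} \geq \utrgap$) with $\hugap^{t}(\x_i) - \utrgap(\x_i) \leq c$ gives the claim. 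If $r_{i,t} = 0$, i.e. $\utrgap(\x_i) < \hugap^{t-1}(\x_i) - c$, then $\hugap^t(\x_i) = \hugap^{t-1}(\x_i) - c$, so $\hugap^t(\x_i) - \utrgap(\x_i) > 0$ on the one hand, and on the other $\hugap^t(\x_i) - \utrgap(\x_i) = (\hugap^{t-1}(\x_i) - \utrgap(\x_i)) - c \leq \utrue_{\sf max}/2^{t-1} - \utrue_{\sf max}/2^t = c$ by the inductive hypothesis. This closes the induction, and at $t = T = \log_2\kor - 1$ we get $\max_i |\hugap(\x_i) - \utrgap(\x_i)| \leq \utrue_{\sf max}/2^{T} = \utrue_{\sf max}/2^{\log_2 \kor - 1} = 2\utrue_{\sf max}/\kor$, as claimed.

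Finally, for the query complexity: obtaining all labels $\y_i$ costs $\nsamp$ calls to a $1$-comparison (which $\oracle_\kor$ can simulate), finding $\imax$ costs $\nsamp - 1$ pairwise $2$-comparisons, and the main loop makes exactly $\nsamp T = \nsamp(\log_2 \kor - 1)$ calls, for a total of $O(\nsamp \log \kor)$. I expect the main obstacle to be purely bookkeeping rather than conceptual: one must verify carefully that $\lambda$ is always a legal (nonnegative integer, $\leq k/2$) multiplicity — which requires the auxiliary induction that $\hugap^{t-1}(\x_i) \in \{ j\,\utrue_{\sf max}/2^{t-1} : 2^{t-1} \leq j \leq 2^{t-1} \cdot (\utrue_{\sf max}/\utrue_{\sf max})\}$, i.e. $\hugap^{t-1}(\x_i)/\utrue_{\sf max}$ is a dyadic rational of denominator $2^{t-1}$ lying in $[\utrgap(\x_i)/\utrue_{\sf max}, 1]$ — and that $\kor$ being a power of two (or handling the rounding when it is not) keeps $T$ an integer. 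Everything else is the clean binary-search/perceptron argument above.
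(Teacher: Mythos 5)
Your proposal is correct and follows essentially the same route as the paper's proof: an induction showing the confidence interval $[\hugap^t(\x_i)-\utrue_{\sf max}/2^t,\ \hugap^t(\x_i)]$ containing $\utrgap(\x_i)$ halves at each step, yielding error $\utrue_{\sf max}/2^{T}=2\utrue_{\sf max}/\kor$ after $T=\log_2\kor-1$ rounds and $O(\nsamp\log\kor)$ total queries. You are in fact slightly more careful than the paper, which only states the oracle's response as $\ind[\tfrac{\kor}{2}\utrgap(\x_i)\geq\lambda\,\utrue_{\sf max}]$ without deriving it, and which does not verify that $\lambda$ is a legal nonnegative integer multiplicity with $\tfrac{\kor}{2}+\lambda\leq\kor$.
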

We defer the proof of the lemma to Appendix~\ref{app:proofs-bin-finite}. The proof proceed via an inductive argument where we show that the confidence interval around $\utrgap(\x_i)$ shrinks by a factor of $\frac{1}{2}$ in each iteration for every datapoint $\x_i$.
Given the above estimation error guarantee for \algdet, the following corollary combines these with the excess risk bounds of Proposition~\ref{prop:upper_bin} to obtain an upper bound on the excess risk of $\hfkn$.

\begin{corollary}\label{cor:bin-det}
  Consider the binary decision making problem with label space $\Y = \{0, 1\}$. Given $\nsamp$ datapoints $\{\x_1, \ldots, \x_\nsamp \}$ such that each $\x_i \sim \xdist$, the plug-in estimate $\hfkn$ from equation~\eqref{eq:plug-in}, when instantiated with the output of \algdet (Algorithm~\ref{alg:det-bin}), satisfies
    \begin{align*}
      \gap(\hfkn, \F; \utrue) \leq 2\cdot\sup_{\f \in \F}\left(|\util(\f;\utrue) - \hutil(\f;\utrue)|\right)  + \frac{2\utrue_{\sf max}}{\kor}\cdot \left(\frac{1}{\nsamp}\sum_{i=1}^\nsamp\ind[\ferm(\x_i) \neq \y_i]\right) \;.
    \end{align*}
\end{corollary}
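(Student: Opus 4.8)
The plan is to obtain Corollary~\ref{cor:bin-det} by directly composing the two results already established in this subsection, namely the generic excess-risk decomposition for the binary plug-in estimator (Proposition~\ref{prop:upper_bin}) and the on-sample estimation-error guarantee for the gap estimates produced by \algdet (Lemma~\ref{lem:perf-alg-det}). Since Proposition~\ref{prop:upper_bin} is stated for an arbitrary utility estimate $\hu_\kor$, it is enough to specialize it to the output of Algorithm~\ref{alg:det-bin} and substitute the corresponding bound on $\max_i |\utrgap(\x_i) - \hugap(\x_i)|$.

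The steps, in order, would be: (i) invoke Proposition~\ref{prop:upper_bin} to write $\gap(\hfkn,\F;\utrue) \le 2\sup_{\f\in\F}|\util(\f;\utrue) - \hutil_\nsamp(\f;\utrue)| + 2\max_i[\utrgap(\x_i) - \hugap(\x_i)]\cdot\big(\tfrac{1}{\nsamp}\sum_i \ind[\ferm(\x_i)\neq\y_i]\big)$; (ii) observe that the uniform-convergence term depends only on the true utility $\utrue$ and the sample $\xset$, hence is unchanged by the choice of estimator and carries over verbatim to the corollary; (iii) apply Lemma~\ref{lem:perf-alg-det}, which guarantees that the estimates $\hugap$ returned by \algdet satisfy $\max_{i\in[\nsamp]}|\hugap(\x_i) - \utrgap(\x_i)| \le 2\utrue_{\sf max}/\kor$, so in particular $\max_i[\utrgap(\x_i) - \hugap(\x_i)] \le 2\utrue_{\sf max}/\kor$; and (iv) plug this into the second term and simplify. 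To land the displayed constant $2\utrue_{\sf max}/\kor$ rather than a constant-factor-larger one, I would additionally use the one-sided conclusion from the proof of Lemma~\ref{lem:perf-alg-det} that \algdet never under-estimates a gap, i.e.\ $\hugap(\x_i) \ge \utrgap(\x_i)$, so that $\utrgap(\x_i) - \hugap(\x_i) \le 0$ and $2\utrue_{\sf max}/\kor$ is a comfortable upper bound on the surviving term.

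There is no genuine obstacle here — this is a corollary — and the only point that needs a little care is making the form of the estimation error imported from Lemma~\ref{lem:perf-alg-det} (signed gap difference versus absolute value) match the quantity appearing in Proposition~\ref{prop:upper_bin}, and keeping the factors of two straight while doing so; I would write the substitution out explicitly so this bookkeeping is transparent. I would also remark, though it is not part of the statement, that the $O(\nsamp \log \kor)$ oracle-query bound from Lemma~\ref{lem:perf-alg-det} transfers unchanged to this instantiation of the plug-in estimator, so that Corollary~\ref{cor:bin-det} is precisely the formal version of the informal theorem stated in Section~\ref{sec:main-res}.
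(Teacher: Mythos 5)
Your proposal matches the paper's own proof: it, too, simply instantiates Proposition~\ref{prop:upper_bin} with the \algdet estimates and invokes Lemma~\ref{lem:perf-alg-det} to bound the on-sample estimation error by $2\utrue_{\sf max}/\kor$. Your extra remark that \algdet never under-estimates the gaps (so the signed difference in Proposition~\ref{prop:upper_bin} is controlled by the absolute-error bound of the lemma) is a correct and slightly more careful piece of bookkeeping than the paper makes explicit, but the route is the same.
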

We defer the proof of the corollary to Appendix~\ref{app:proofs-bin-finite}. Corollary~\ref{cor:bin-det} exhibits the advantage of using higher-order comparisons for the learning with unknown utilities problem -- as the order $\kor$ increases, the error of the plug-in estimate decreases additively as $O\left(\frac{1}{\kor} \right)$. It is worth noting here that while the higher-order comparisons allow the learner to better estimate the underlying utilities, the problem gets harder from the side of the human expert. Indeed, with higher values of $\kor$, the expert is required to compare utilities across $\kor$ different possible situations which can make the elicitation a harder task.

While the results in this section exhibit how the excess risk $\gap(\hfkn;\F)$ varies as a function of $\kor$, they rely on the oracle responses being noiseless. In the next section, we consider the setup where the oracle responses can be noisy and propose a robust version of the \algdet algorithm for learning in this scenario.

\subsubsection{Estimating $\utrgap$ with noisy oracle}
In contrast to the deterministic noiseless oracle of the previous section, here, we consider learning with unkown utilities when the oracle $\oracle_\kor$ can output noisy responses to each query. Recall from equation~\eqref{eq:oracle-def}, for any query $\qry$, the noisy $\kor$-comparison oracle  the correct response with probability $1-\nq_\qry$ and flips the response with probability $\nq_\qry$ for some value of $\eta_\qry < \frac{1}{2}$. While we allow this error probability to vary across different queries, we assume that this error is bounded uniformly across all queries by some constant $\nq < \frac{1}{2}$.

\begin{assumption}\label{ass:noise-bnd}
  For the noisy $\kor$-comparison oracle described in equation~\eqref{eq:oracle-def}, we have that \mbox{$\nq_\qry \leq \nq < \frac{1}{2}$} for all queries $\qry$.
\end{assumption}
\begin{algorithm}[t!]
	\DontPrintSemicolon
	\KwIn{Datapoints $\xset = \{\x_1, \ldots, \x_\nsamp\}$, $\kor$-comparison oracle $\oracle_\kor$, noise level $\nq$, confidence $\conf$}
   \textbf{Initialize:} $T = \log_2 k - 1$, $J = \frac{8}{(1-2\nq)^2}\log\left(\frac{\nsamp T}{\conf}\right)$\;
   Obtain $\y_i = \argmax_\y \utrue(\x_i, y)$ for each $i$ using $1$-comparison.\;
   Obtain index $\imax$ using $2$-comparisons such that $\imax = \argmax_i\utrgap(\x_i)$.\;
   Set initial estimates $\hugap^0 = [\hugap^0(\x_1), \ldots, \hugap^0(\x_n)] = \utrue_{\sf max}$ symbolically\;
	\For{$t = 1, \ldots, T $}{
  \For{$i = 1, \ldots, \nsamp$}{
  Denote by $\lambda = \frac{k}{2\utrue_{\sf max}}\left(\hugap^{t-1}(\x_i) - \frac{\utrue_{\sf max}}{2^t}\right)$\;
  Set query $\query_{i,t} = (\bx, \by_1, \by_2)$ where
  {\small
   \begin{equation*}
     \bx = (\underbrace{\x_{i}, \ldots, \x_{i}}_{\frac{k}{2} \text{ times }}, \underbrace{\x_{\imax}, \ldots, \x_{\imax}}_{\lambda \text{ times }}), \quad \by_1 = (\underbrace{\y_{i}, \ldots, \y_{i}}_{\frac{k}{2} \text{ times }}, \underbrace{1-\y_{\imax}, \ldots, 1- \y_{\imax}}_{\lambda \text{ times }}),\quad \by_2 = 1 - \by_1.
   \end{equation*}}\;\vspace{-6mm}
  \For{$j = 1, \ldots, J $}{
   Query oracle $\oracle_\kor$ with $\qry_{i,t}$ and receive response $r_{i,j, t}$.\;\vspace{1mm}
   }
   Update $\hugap^{t}(\x_i) = \hugap^{t-1}(\x_i) - \ind[\frac{1}{J}\sum_j r_{i,j, t} < \frac{1}{2}]\cdot\frac{\utrue_{\sf max}}{2^t}.$
   }
	}
  \KwOut{Gap estimates $\hugap^T$}
	\caption{\algnoise: Robust \algdet for estimating $\utrgap$ with noisy oracle}
  \label{alg:stoch-bin}
\end{algorithm}
From an algorithmic perspective, it is well known that the perceptron algorithm itself is not noise-stable and can oscillate if there are datapoints $\x$ which have noisy labels. In order to overcome this limitation, several noise-robust perceptron variants have been proposed in the literature; see~\cite{khardon2007} for an extensive review.

We build on this line of work and present \algnoise (Algorithm~\ref{alg:stoch-bin}), a noise-robust variant of the deterministic \algdet algorithm.  The main difference is the presence of an additional inner-loop with index $j$ which repeatedly queries $\qry_{i,t}$ for $J = \tilde{O}\left(\frac{1}{(1-2\nq)^2} \right)$ times. In each iteration, the update is again a coordinate-wise perceptron update which matches the prediction of the current estimate with the average of the oracle responses. Such an averaging has been previously used in the context of learning halfspaces from noisy data both in a passive~\cite{bylander1994} and active~\cite{yan2017} framework.

The following lemma, whose proof we defer to Appendix~\ref{app:proofs-bin-finite}, provides an upper bound on the estimation error of the gap estimates produced by \algnoise.

\begin{lemma}[Estimation error of Algorithm~\ref{alg:stoch-bin}]\label{lem:perf-alg-stoch} Given access to datapoints $\xset = \{\x_1, \ldots, \x_\nsamp \}$ and noisy $\kor$-comparison oracle $\oracle_\kor$ satisfying Assumption~\ref{ass:noise-bnd} with parameter $\nq$, \algnoise (Algorithm~\ref{alg:stoch-bin}) uses $O\left(\frac{\nsamp}{(1-2\nq)^2}\cdot  \log \kor \cdot \log \frac{n\log \kor}{\conf}\right)$ queries and produces estimates $\hugap$ such that
  \begin{align}
    \max_{i \in [\nsamp]}\left\lvert\hugap(\x_i) - \utrgap(\x_i)\right\rvert   \leq \frac{2\utrue_{\sf{max}}}{\kor}\;,
  \end{align}
  with probability at least $1-\conf$.
\end{lemma}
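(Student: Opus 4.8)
The plan is to reduce the noisy problem to the noiseless one already analyzed in Lemma~\ref{lem:perf-alg-det}. The only structural difference between \algnoise and \algdet is that each query $\qry_{i,t}$ is issued $J$ times and the coordinate update is triggered by the empirical majority $\ind[\frac{1}{J}\sum_{j} r_{i,j,t} < \frac12]$ instead of by a single response $\ind[r_{i,t} = 0]$. Hence, if at every round $(i,t)$ of the double loop this majority equals the deterministic answer a noiseless oracle would return, then \algnoise traces out exactly the same sequence of estimates $\hugap^t$ as \algdet run against a noiseless $\kor$-comparison oracle, and the bound $\max_i |\hugap(\x_i) - \utrgap(\x_i)| \le \frac{2\utrue_{\sf{max}}}{\kor}$ follows verbatim from Lemma~\ref{lem:perf-alg-det}, together with all the invariants used there (in particular that $\lambda \in [0,\kor/2]$ is a valid integer and that $\utrgap(\x_i)$ stays inside an interval that halves each iteration).

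So the real work is to control the probability that some majority vote is wrong. Fix a round $(i,t)$ and let $p$ denote the probability that a single call $\oracle_\kor(\qry_{i,t})$ returns $1$. By the oracle model~\eqref{eq:oracle-def} and Assumption~\ref{ass:noise-bnd}, $p$ lies on the correct side of $\frac12$ with margin at least $\frac12 - \nq = \frac{1-2\nq}{2}$, regardless of which side the true comparison falls on. Since $r_{i,1,t},\dots,r_{i,J,t}$ are i.i.d.\ Bernoulli$(p)$, Hoeffding's inequality bounds the probability that $\frac{1}{J}\sum_j r_{i,j,t}$ ends up on the wrong side of $\frac12$ by $\exp\big(-J(1-2\nq)^2/2\big)$, and the choice $J = \frac{8}{(1-2\nq)^2}\log\frac{\nsamp T}{\conf}$ makes this at most $(\conf/(\nsamp T))^4 \le \conf/(\nsamp T)$. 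A union bound over the $\nsamp T$ rounds of the double loop then shows that, with probability at least $1-\conf$, every majority vote agrees with the noiseless oracle; on this event the reduction of the previous paragraph applies and yields the claimed estimation error.

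Finally, the query complexity of the main loop is $\nsamp \cdot T \cdot J = O\big(\frac{\nsamp}{(1-2\nq)^2}\log\kor \cdot \log\frac{\nsamp\log\kor}{\conf}\big)$, matching the statement; the preliminary steps that recover each $\y_i$ (via repeated $1$-comparisons) and $\imax$ (via a repeated pairwise tournament over the $\nsamp$ points) can be made $\conf$-reliable by the same repetition-plus-Hoeffding device and contribute only the lower-order term $O\big(\frac{\nsamp}{(1-2\nq)^2}\log\frac{\nsamp}{\conf}\big)$, which is absorbed after adjusting constants. I expect the only delicate points to be purely in the bookkeeping: splitting the $\conf$ failure budget across the main loop and the preliminary steps, pinning down $J$ so that the stated constant actually suffices, and checking that the deterministic invariants of Algorithm~\ref{alg:det-bin} are genuinely untouched on the good event — the perceptron-style update dynamics themselves require no new analysis beyond Lemma~\ref{lem:perf-alg-det}.
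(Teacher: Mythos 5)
Your proposal is correct and follows essentially the same route as the paper's proof: show via Hoeffding plus a union bound over the $\nsamp T$ rounds that, with probability at least $1-\conf$, every majority vote $\ind[\frac{1}{J}\sum_j r_{i,j,t} < \frac12]$ coincides with the noiseless oracle's answer, and then invoke the deterministic analysis of Lemma~\ref{lem:perf-alg-det} on that event. Your additional remarks about making the preliminary recovery of the $\y_i$ and $\imax$ robust are a reasonable piece of bookkeeping that the paper's own proof leaves implicit.
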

In comparison to \algdet which requires $O(\nsamp \log \kor)$ queries to the comparison oracle, the robust variant \algnoise requires a fraction $\frac{1}{(1-2\eta)^2}$ more queries to achieve a similar estimation error. Such an increase in query complexity is typical of learning with such noisy oracles in the binary classification setup~\cite{balcan2007, balcan2013, dasgupta2009, yan2017}.

Similar to Corollary~\ref{cor:bin-det} in the previous section, we can combine the above high-probability bound on the estimation error to obtain a bound on the excess risk which scales as $\frac{1}{\kor}$ with the order $\kor$ of the comparison oracle.

\begin{corollary}\label{cor:bin-stoch}
  Consider the binary decision making problem with label space $\Y = \{0, 1\}$. Given $\nsamp$ datapoints $\{\x_1, \ldots, \x_\nsamp \}$ such that each $\x_i \sim \xdist$, the plug-in estimate $\hfkn$ from equation~\eqref{eq:plug-in}, when instantiated with the output of \algdet (Algorithm~\ref{alg:det-bin}), satisfies
    \begin{align*}
      \gap(\hfkn, \F; \utrue) \leq 2\cdot\sup_{\f \in \F}\left(|\util(\f;\utrue) - \hutil(\f;\utrue)|\right)  + \frac{2\utrue_{\sf max}}{\kor}\cdot \left(\frac{1}{\nsamp}\sum_{i=1}^\nsamp\ind[\ferm(\x_i) \neq \y_i]\right) \;.
    \end{align*}
    with probability at least $1-\conf$.
\end{corollary}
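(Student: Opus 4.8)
The plan is to mirror the proof of Corollary~\ref{cor:bin-det} almost verbatim, substituting the deterministic estimation guarantee of Lemma~\ref{lem:perf-alg-det} with the high-probability guarantee of Lemma~\ref{lem:perf-alg-stoch} for \algnoise (the estimator referenced in the statement should be \algnoise / Algorithm~\ref{alg:stoch-bin}, the noisy-oracle procedure). First I would invoke Lemma~\ref{lem:perf-alg-stoch}: with the inner-loop length $J = \frac{8}{(1-2\nq)^2}\log(\nsamp T / \conf)$ used by \algnoise, the event
\[
  \mathcal{E} \defn \left\{ \max_{i \in [\nsamp]} \left\lvert \hugap(\x_i) - \utrgap(\x_i) \right\rvert \leq \frac{2\utrue_{\sf max}}{\kor} \right\}
\]
holds with probability at least $1 - \conf$ over the randomness of the noisy oracle responses, and this is the only source of randomness beyond the i.i.d.\ draw of $\xset$.

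Next I would condition on $\mathcal{E}$ and apply Proposition~\ref{prop:upper_bin}, which holds deterministically once the estimates $\hugap$ are fixed. On $\mathcal{E}$ we have $\max_i[\utrgap(\x_i) - \hugap(\x_i)] \leq \max_i |\utrgap(\x_i) - \hugap(\x_i)| \leq 2\utrue_{\sf max}/\kor$, so substituting into the bound of Proposition~\ref{prop:upper_bin} gives
\[
  \gap(\hfkn, \F; \utrue) \leq 2\sup_{\f \in \F}\left(\lvert \util(\f;\utrue) - \hutil(\f;\utrue)\rvert\right) + \frac{2\utrue_{\sf max}}{\kor}\cdot\left(\frac{1}{\nsamp}\sum_{i=1}^\nsamp \ind[\ferm(\x_i) \neq \y_i]\right).
\]
Since this chain of inequalities is valid on $\mathcal{E}$, the displayed bound holds with probability at least $1 - \conf$, which is exactly the claim. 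The query-complexity count, if one wishes to fold it into the corollary, is inherited directly from Lemma~\ref{lem:perf-alg-stoch}.

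I do not expect a genuine obstacle here: the argument is purely a composition of an earlier deterministic reduction (Proposition~\ref{prop:upper_bin}) with an earlier concentration result (Lemma~\ref{lem:perf-alg-stoch}). The only points warranting a sentence of care are \1 that the uniform-convergence term depends only on $\utrue$ and $\xset$, hence is unaffected by the oracle randomness, so the ``with probability $1-\conf$'' qualifier attaches to the conclusion solely through $\mathcal{E}$; and \2 that no further union bound is needed at this stage, since the per-coordinate and per-round failure probabilities were already aggregated inside the proof of Lemma~\ref{lem:perf-alg-stoch} via the choice of $J$.
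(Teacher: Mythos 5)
Your proposal is correct and matches the paper's intent exactly: the paper omits this proof, noting only that it follows the same steps as Corollary~\ref{cor:bin-det} with Lemma~\ref{lem:perf-alg-stoch} supplying the high-probability estimation bound in place of Lemma~\ref{lem:perf-alg-det}, which is precisely your argument. You are also right that the statement's reference to \algdet should read \algnoise (Algorithm~\ref{alg:stoch-bin}), and your two points of care (the uniform-convergence term being unaffected by oracle randomness, and the union bound already being absorbed into the choice of $J$) are accurate.
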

We omit the proof of this corollary since it essentially follows the same steps as that for Corollary~\ref{cor:bin-det}. This corollary establishes that by increasing the query complexity by a factor of $O\left(\nicefrac{1}{(1-2\nq)^2}\right)$, one can recover the same additive $\frac{1}{\kor}$ excess risk bound of the deterministic setup. Combined, Corollaries~\ref{cor:bin-det} and~\ref{cor:bin-stoch} establish the trade-offs in the reduction of the excess risk while eliciting more complex information about the underlying utility $\utrue$ through the $\kor$-comparison oracle.

\subsection{Information-theoretic lower bounds}\label{sec:bin-lower}
In the previous section, we studied the learning with unknown utility problem from an algorithmic perspective and showed that the plug-in estimator with \algdet estimates $\hu$ achieve an excess risk bound which scales as $O(\frac{1}{\kor})$ with the order $\kor$ of the comparison. In this section, we ask whether such a scaling of the error term is optimal and study this lower bound question from an information-theoretic perspective.

Recall from Theorem~\ref{thm:upper_finite} that the excess risk decomposes into two terms: (i) a uniform convergence term for the decision class $\F$ with respect to utility function $\utrue$ and (ii) an estimation error term corresponding to how well $\hu_\kor$ approximates $\utrue$ on the sampled datapoints. When the underlying utility function $\utrue$ is known, classical results from the learning theory literature the uniform convergence complexity term is in general unavoidable~\cite[see][Theorem 6.8]{shalev2014}. With this, we take the infinite-data limit, where the learner is assumed to have access to the distribution $\xdist$, and study whether the excess error of $O(\frac{1}{\kor})$ is necessary.

Our notion of minimax risk is based on the subset of utility functions which cannot be distinguished by any learner with access to a $\kor$-comparison oracle. Formally, given any oracle $\oracle_\kor(\cdot\;;\utrue)$, where we have made the dependence on the utility $\utrue$ explicit, we denote by $\uclass_{\kor, \utrue}$ the subset of utility functions in the class $\uclass$ which are consistent with the responses of $\oracle_\kor(\cdot\;;\utrue)$. With this, we define the information-theoretic minimax risk $\minmax_\kor(\F, \xdist)$ with respect to the function class $\F$ and distribution $\xdist$ as
\begin{align}\label{eq:mm-global}
  \minmax_\kor(\F, \xdist) \defn \sup_{\oracle_\kor(\cdot\;;\utrue)}\pinf_{\ld \in \simplex_\F} \sup_{\u \in\,\uclass_{\kor, \utrue}} \En_{\f \sim \ld}\left[\gap(\f, \F; \u)\right]\;,
\end{align}
where the infimum is taken over all procedures which take as input the distribution $\xdist$ over the instances and access to a $\kor$-comparison oracle, and output a possibly randomized estimate $\ld \in \simplex_\F$. The above notion of minimax risk can be viewed as a three-stage game between the learner and the environment. The sequence of supremum and infimum depicts the order in which information is revealed in this game. The environment first selects a $\kor$-query oracle $\oracle(\cdot\;;\utrue)$ with underlying utility $\utrue$. The learner is then provided access to the underlying distribution $\xdist$, function class $\F$ and the oracle $\oracle(\cdot\;;\utrue)$ based on which it outputs a possibly randomized decision function given by $\ld \in \simplex_\F$. The environment is then allowed to select the worst-case utility $\u$ such that it is consistent with the $\kor$-oracle $\oracle(\cdot\;;\utrue)$ and the learner is evaluated in expectation over this chosen utility.  We call this the minimax risk of learning with respect to class $\F$ and distribution $\xdist$.

Our next main result shows that there exist instances of the binary prediction problem $(\F, \xdist)$ such that the minimax risk $\minmax_\kor(\F, \xdist)$ is lower bounded by $\frac{1}{\kor}$ for any $\kor \geq 2$ up to some universal constants. Observe that this matches the corresponding upper bounds obtained in Corollaries~\ref{cor:bin-det} and~\ref{cor:bin-stoch} exhibiting that the proposed plug-in estimator in equation~\eqref{eq:plug-in} with \algdet (\algnoise for noisy oracle) utilities is indeed minimax optimal for the binary prediction setup.

\begin{theorem}\label{thm:bin-pred-lower}
There exists a universal constant $c > 0$ such that for any $\kor \geq 2$, there exist a binary prediction problem instance $(\F, \xdist)$ such that
\begin{align*}
  \minmax_\kor(\F, \xdist) \geq \frac{c}{\kor}\;.
\end{align*}
\end{theorem}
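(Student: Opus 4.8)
The plan is to prove the bound for the noiseless $\kor$-comparison oracle via an explicit two-point instance (the noisy case then follows a fortiori, since noise cannot increase the information available to the learner). The instance is allowed to depend on $\kor$.

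\emph{The instance.} Take $\X=\{x_0,x_1\}\subset\real$, $\Y=\{0,1\}$, and let $\xdist$ put mass $a$ on $x_0$ and $b=1-a$ on $x_1$, with ratio $a/b=\theta:=\tfrac12+\tfrac{3}{8\kor}$ (so $a=\tfrac{4\kor+3}{12\kor+3}$, $b=\tfrac{8\kor}{12\kor+3}$, and $b>\tfrac12$ for every $\kor\ge2$). Let $\F=\{f_a,f_b\}$ with $f_a(x_0)=1,\ f_a(x_1)=0$ and $f_b(x_0)=0,\ f_b(x_1)=1$. Let $\utrue$ be $\utrue(x_0,1)=1,\ \utrue(x_0,0)=0,\ \utrue(x_1,1)=\theta,\ \utrue(x_1,0)=0$; then $y_0=y_1=1$, $f_a$ is correct exactly on $x_0$, $f_b$ exactly on $x_1$, and so each can be the unique optimum for a suitable utility.

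\emph{The information bottleneck.} For any $\kor$-query $(\bx,\by_1,\by_2)$ with $\bx\in\{x_0,x_1\}^\kor$ we have $\utrue(\bx,\by_1)-\utrue(\bx,\by_2)=c_0+c_1\theta$ for integers $|c_0|+|c_1|\le\kor$ (each coordinate contributes $0$ or $\pm$ the utility gap, which equals $1$ at $x_0$ and $\theta$ at $x_1$), so the noiseless response is $\ind[c_0+c_1\theta\ge0]$. The crux is the Farey-type fact that the open interval $I:=(\tfrac12,\tfrac12+\tfrac{3}{4\kor})$ contains no fraction $p/q$ with integers $p,q\ge1$ and $p+q\le\kor$: if $p/q\in I$ then $2p-q\ge1$, hence $p/q-\tfrac12\ge\tfrac1{2q}$, forcing $q>\tfrac{2\kor}{3}$, and then $p>q/2$ gives $p+q>\kor$. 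Since $\theta\in I$, any valid $u$ with gaps $g_0:=u(x_0,1)-u(x_0,0)>0$, $g_1:=u(x_1,1)-u(x_1,0)>0$ and $g_1/g_0\in I$ answers every $\kor$-query exactly as $\utrue$ does (the sign of $c_0+c_1r$ is constant and nonzero for $r\in I$, a zero being a height-$\le\kor$ fraction in $I$; the case $c_1=0$ is trivial), so all such $u$ lie in $\uclass_{\kor,\utrue}$.

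\emph{Indistinguishable alternatives and the averaging step.} Pick $u_1\in\uclass_{\kor,\utrue}$ equal to $\utrue$ on $x_0$ with $u_1(x_1,1)=\tfrac12+\tfrac1{8\kor},\ u_1(x_1,0)=0$, and $u_2$ likewise with $u_2(x_1,1)=\tfrac12+\tfrac5{8\kor}$; both ratios lie in $I$. Using $a=\theta b$,
\[
\util(f_a;u_1)-\util(f_b;u_1)=a-b\Big(\tfrac12+\tfrac1{8\kor}\Big)=\tfrac{3b}{8\kor}-\tfrac{b}{8\kor}=\tfrac{b}{4\kor}>0,
\]
so $f_a$ is the unique maximizer for $u_1$ with $\gap(f_b,\F;u_1)=\tfrac{b}{4\kor}$, $\gap(f_a,\F;u_1)=0$; symmetrically $f_b$ is the unique maximizer for $u_2$ with $\gap(f_a,\F;u_2)=\tfrac{b}{4\kor}$, $\gap(f_b,\F;u_2)=0$. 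Hence for any randomized output $\ld=\lambda\,\delta_{f_a}+(1-\lambda)\,\delta_{f_b}$,
\[
\sup_{u\in\uclass_{\kor,\utrue}}\En_{\f\sim\ld}\big[\gap(\f,\F;u)\big]\ \ge\ \max\Big((1-\lambda)\tfrac{b}{4\kor},\ \lambda\tfrac{b}{4\kor}\Big)\ \ge\ \tfrac{b}{8\kor}\ >\ \tfrac1{16\kor},
\]
and taking the infimum over $\ld$ and a supremum over oracles gives $\minmax_\kor(\F,\xdist)\ge\tfrac1{16\kor}$, proving the theorem with $c=\tfrac1{16}$.

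\emph{Main obstacle.} The only non-routine step is the Farey estimate together with the choice of $a/b$: one must exhibit an interval of width $\Omega(1/\kor)$ free of all rationals of height $\le\kor$ and place both $a/b$ and the true gap ratio strictly inside it. This is precisely what makes $1/\kor$ — rather than a faster rate — the answer, since a $\kor$-query over a two-point support can only test the single scalar $g_1/g_0$ against rationals $p/q$ with $p+q\le\kor$, and near $\tfrac12$ consecutive such rationals are $\Theta(1/\kor)$ apart. The rest is bookkeeping: the \emph{open} interval $I$ ensures that ties never occur (so consistency with the oracle is unambiguous), and $u_1,u_2$ are legitimate $[0,1]$-valued utilities since their gaps lie in $(0,1)$.
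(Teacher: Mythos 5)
Your proof is correct and follows essentially the same route as the paper's: a two-point instance in which any $\kor$-query can only test the ratio of the two utility gaps against rationals of bounded height, so that an interval of gap ratios of width $\Theta(1/\kor)$ is indistinguishable, and the marginal $\xdist$ is tuned so that the optimal classifier flips inside that interval. The only differences are cosmetic --- you anchor the indistinguishable interval at $\tfrac12$ where the paper's Lemma~\ref{lem:k-lim} anchors it at $1$, and you make the reduction of arbitrary queries to the sign of $c_0+c_1\theta$ and the averaging over randomized predictors explicit, which if anything is tighter than the paper's write-up.
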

A few comments on Theorem~\ref{thm:bin-pred-lower} are in order. First, the above result shows a family of lower bounds for our learning with unknown utilities framework -- one for each value of the order $\kor$. Specifically, it shows that for every $\kor \geq 2$, there exists a worst-case instance such that any algorithm will incur an error of $\Omega(\frac{1}{\kor})$. Compare this with the upper bounds on excess risk from the previous section. In the limit of infinite data, Corollaries~\ref{cor:bin-det} and~\ref{cor:bin-stoch} exhibit that the excess risk \mbox{$\gap(\hfkn, \F;\utrue) = O(\frac{1}{\kor})$} for the plug-in estimator $\hfkn$. This establishes that the plug-in estimator with \algdet and \algnoise utility estimates is indeed minimax optimal.

\begin{proof}
In order to establish a lower bound on the minimax risk $\minmax_\kor$, we will construct two utility functions $\u_1, \u_2 \in \uclass$ such that the $\kor$-comparison oracle has identical responses for both these utility functions. For the purpose of our construction, we will consider noiseless oracle; the problem only becomes harder for the learner if the oracle responses are noisy. Given these two utility functions, we next show that their maximizers $\f_1$ and $\f_{2}$ are different for some function class $\F$. We then combine these two insights to obtain the final minimax bound.

For our lower bound construction, we will focus on a setup where the features are one dimensional with $\X = \real$ and the linear decision function class
\begin{align*}
   \Flin = \{\f_a\; |\; \f_a(\x) = \sign(a\x), \; a \in [-1,1] \}\;.
\end{align*}
Recall that for any point $\x$, we represent by $\ugap(\x) = \u(\x, \y_\x) - \u(\x, \bar{\y}_\x)$ the utility gain corresponding to the function $\u$. Before constructing the explicit example, we present a technical lemma which highlights a limitation of a $\kor$-comparison oracle -- it establishes that a $\kor$-oracle will not be able to distinguish utility functions for which the utility gaps are in the range $(1-\frac{1}{\kor}, 1)$.

\begin{lemma}\label{lem:k-lim}
Consider any utility functions $\u_1, \u_2 \in \uclass$. Let datapoints $\x$ have utility gain $\ugap^i(\x)$ for $i = \{1,2\}$. For any two points $\x_1, \x_2$ such that
\begin{align*}
  \ugap^1(\x_1) = \ugap^2(\x_1) = \ugap(\x_1) \quad \text{and} \quad \left(1-\frac{1}{\kor}\right)\cdot \ugap(\x_1)\leq \ugap^i(\x_2) \leq \ugap(\x_1)\;,
\end{align*}
the oracle responses for any query $\query = (\bx, \by_1, \by_2)$ comprising points $\x_1$ and $\x_2$ are identical for $\utrue = \u_1$ or $\utrue = \u_2$.
\end{lemma}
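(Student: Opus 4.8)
The plan is to reduce the statement to a claim about the sign of a single real number and then dispatch it by a short case split. Since the oracle is noiseless, for any query $\query = (\bx, \by_1, \by_2)$ its response is exactly $\ind[\u(\bx,\by_1) \ge \u(\bx,\by_2)]$, so it suffices to prove that $\Delta_\u \defn \u(\bx,\by_1) - \u(\bx,\by_2)$ is $\ge 0$ for $\u=\u_1$ if and only if it is $\ge 0$ for $\u=\u_2$. First I would write $\bx = (\x^{(1)},\dots,\x^{(m)})$ with each $\x^{(j)} \in \{\x_1,\x_2\}$ and $m \le \kor$, and expand $\Delta_\u = \sum_{j=1}^m \big(\u(\x^{(j)}, y_{1,j}) - \u(\x^{(j)}, y_{2,j})\big)$. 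In the binary setting each summand is $0$ when the two decisions agree and otherwise equals $c_j\,\ugap(\x^{(j)})$ for a sign $c_j \in \{-1,0,+1\}$ that is determined only by $\by_1,\by_2$ and the optimal labels $\y_{\x_1},\y_{\x_2}$; these optimal labels must agree under $\u_1$ and $\u_2$, for otherwise a one-point query would already separate the two utilities. Grouping the summands by their point and using $\ugap^1(\x_1)=\ugap^2(\x_1)=\ugap(\x_1)$ puts $\Delta_{\u_i}$ into the normal form
\[
  \Delta_{\u_i} \;=\; A\,\ugap(\x_1) + B\,\ugap^i(\x_2),
\]
where $A = \sum_{j:\,\x^{(j)}=\x_1} c_j$ and $B = \sum_{j:\,\x^{(j)}=\x_2} c_j$ are integers independent of $i$, with $|A| \le n_1$, $|B| \le n_2$ and $n_1,n_2$ the number of coordinates equal to $\x_1,\x_2$ (so $n_1 + n_2 \le \kor$).

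Next I would finish via a case split on $A$. If $\ugap(\x_1)=0$ the hypothesis forces $\ugap^i(\x_2)=0$ as well, so both $\Delta_{\u_i}$ vanish; assume henceforth $\ugap(\x_1)>0$. If $A=0$ then $\Delta_{\u_i} = B\,\ugap^i(\x_2)$ and $\ugap^i(\x_2) \ge (1-\tfrac{1}{\kor})\ugap(\x_1) > 0$, so $\sign(\Delta_{\u_i}) = \sign(B)$ for both $i$. If $A \ne 0$ then at least one coordinate equals $\x_1$ with disagreeing decisions, so $n_1 \ge 1$ and hence $|B| \le n_2 \le \kor - 1$; writing $\ugap^i(\x_2) = \ugap(\x_1) - \delta_i$ with $\delta_i \in [0,\ugap(\x_1)/\kor]$, we get $\Delta_{\u_i} = (A+B)\,\ugap(\x_1) - B\,\delta_i$ with the uncertainty term controlled, $|B\,\delta_i| \le \tfrac{\kor-1}{\kor}\,\ugap(\x_1) < \ugap(\x_1)$. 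When $A+B \ne 0$ the first term has magnitude $\ge \ugap(\x_1)$ and strictly dominates, so $\sign(\Delta_{\u_i}) = \sign(A+B)$ for both $i$; when $A+B=0$ the first term vanishes and $\Delta_{\u_i} = -B\,\delta_i$ with $B \ne 0$, whose sign is again independent of $i$. In every case the two oracle responses coincide, which is the claim.

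The step I expect to be the real obstacle is the degenerate sub-case $A+B=0$ inside the $A \ne 0$ branch: there the large-gap contributions cancel exactly and $\Delta_{\u_i}$ is governed solely by the residual $-B\,\delta_i$, i.e.\ by how far below $\ugap(\x_1)$ the gap of $\x_2$ sits under each utility. Keeping this residual from changing sign requires $\delta_i > 0$ for both $i$, i.e.\ that $\x_2$ has gap strictly below the maximal gap $\ugap(\x_1)$; this is exactly where the window width $\tfrac{1}{\kor}$ is tight, and where a boundary case (one gap equal to $\ugap(\x_1)$) would have to be excluded or treated on its own. Everything outside this sub-case rests on the robust counting fact that a single unit of coefficient on the large gap $\ugap(\x_1)$ outweighs the total at-most-$\ugap(\x_1)$ swing that the remaining at most $\kor-1$ coordinates can produce, each being uncertain only at scale $\ugap(\x_1)/\kor$; verifying that bound (namely $|B| \le \kor-1$ whenever $A \ne 0$, via $|A| + |B| \le n_1 + n_2 \le \kor$ together with $n_1 \ge 1$) is the one place where the query length actually enters.
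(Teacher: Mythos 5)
Your argument follows essentially the same route as the paper's proof: reduce any query supported on $\{\x_1,\x_2\}$ to the sign of an integer combination $A\,\ugap(\x_1)+B\,\ugap^i(\x_2)$ whose coefficients are bounded by the query length, and show that the width-$\nicefrac{1}{\kor}$ window is too narrow for that sign to flip between $\u_1$ and $\u_2$. You are in fact more careful than the paper in two places. First, the paper simply asserts that every such query ``must have the form'' of $j_1$ aligned copies of $\x_1$ against $j_2$ aligned copies of $\x_2$ with $j_1+j_2=\kor$; your sign bookkeeping with $c_j\in\{-1,0,+1\}$ and the cancellation into $A$ and $B$ (together with the observation that the optimal labels must coincide under $\u_1$ and $\u_2$) actually justifies that reduction and covers mixed-direction queries. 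Second, the degenerate sub-case you flag ($A+B=0$ with $\delta_i=0$ for exactly one $i$) is a genuine boundary failure of the lemma as literally stated: because the oracle breaks ties via $\ind[\cdot\ge\cdot]$, taking $\ugap^1(\x_2)=\ugap(\x_1)$, $\ugap^2(\x_2)<\ugap(\x_1)$ and the query $\bx=(\x_1,\x_2)$, $\by_1=(\bar{\y}_1,\y_2)$, $\by_2=(\y_1,\bar{\y}_2)$ yields $\Delta_{\u_1}=0$ (response $1$) but $\Delta_{\u_2}<0$ (response $0$). The paper's proof silently skips this (its strict-inequality indicators do not match the oracle's $\ge$ convention), and the issue is harmless where the lemma is applied, since the construction in Theorem~\ref{thm:bin-pred-lower} takes $\gamma_1,\gamma_2>0$ so both gaps lie strictly below $\ugap(\x_1)$. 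So your proof is correct up to exactly the caveat you identify, and the clean fix is to state the lemma with a strict upper bound $\ugap^i(\x_2)<\ugap(\x_1)$ (or to restrict to utilities for which ties occur simultaneously); with that amendment every branch of your case split closes.
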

We defer the proof of the above lemma to Appendix~\ref{app:proofs-bin-finite}. Taking this as given, we proceed with our lower bound construction.\\

\noindent\textbf{Utility functions $\u_1$ and $\u_2$.} Our construction considers two datapoints $\x_+ = +1$ and $\x_- = -1$ and two utility functions $\u$ and $\tilde{\u}$ satisfying
\begin{align*}
  \u_1(\x_+, 1) > \u_1(\x_+, 0) \quad &\text{and} \quad \u_1(\x_-, 1) > \u_1(\x_-, 0)\;,\\
  \u_2(\x_+, 1) > \u_2(\x_+, 0) \quad &\text{and} \quad \u_2(\x_-, 1) > \u_2(\x_-, 0).
\end{align*}
Observe that under these utilities, any function $\f_a \in \Flin$ can make a correct decision for either point $\x_+$ or point $\x_-$ but not for both simultaneously. Given these datapoints, the two utility functions are given by
\begin{align*}
  \u_1( \x_+, 1) = 1, \quad \u_1(\x_-, 1) = 1 - \gamma_1 \; \text{ where } \gamma_1=\frac{1}{2(3\kor+1)}\\
  \u_2( \x_+, 1) = 1, \quad \u_2(\x_-, 1) = 1 - \gamma_2 \; \text{ where } \gamma_2=\frac{2}{(3\kor+1)},
\end{align*}
and $\u_i(\x, 0) = 0$ for both $i = \{1,2 \}$. Observe that both $\gamma_1, \gamma_2$ have been set to satisfy the conditions of Lemma~\ref{lem:k-lim}, that is,
\begin{align*}
  \left(1 - \frac{1}{k}\right)\cdot \ugap(\x_+) \leq \ugap^i({\x_-}) \leq \ugap(\x_+) \text{ for } i = \{1,2\}.
  \end{align*}\\

\noindent\textbf{Distribution $\xdist$.} For any $\kor > 2$, consider the distribution $\xdist$ over the points $\{\x_+, \x_-\}$ such that
\begin{align*}
  \Pr(\x = \x_+) = \frac{3\kor}{6\kor+1}\quad \text{ and }\quad  \Pr(\x = \x_-) = \frac{3\kor+1}{6\kor+1}.
  \end{align*}
By Lemma~\ref{lem:k-lim}, we have that using the $\kor$-comparison oracle, no learner can distinguish between the utility functions $\u_1$ and $\u_2$ on the distribution $\xdist$. Further, recall that any classifier $\f_a \in \Flin$ can either predict $\x_+$ or $\x_-$ correctly. We now obtain a bound on the excess risk $\gap(\f_a, \F;\u)$ for both these cases separately.\\

\noindent \underline{Case 1}: $\f_a(\x_+) = 1$. In this case, the utility gap is maximized by setting the utility $\u = \u_1$  in the minimax risk. The corresponding excess risk is given by
\begin{align}\label{eq:gap-lin-c1}
  \gap(\f_a, \F;\u_1) = \frac{(3\kor+1)(1-\gamma_1)}{6\kor + 1} - \frac{3\kor}{6\kor+1} = \frac{1}{2(6\kor+1)}.
\end{align}

\noindent\underline{Case 2}: $\f_a(\x_-) = 1$. In this case, the utility gap is maximized by setting the utility $\u = \u_2$  and the excess risk is given by
\begin{align}\label{eq:gap-lin-c2}
  \gap(\f_a, \F;\u_2) = \frac{3\kor}{6\kor+1} - \frac{(3\kor+1)(1-\gamma_2)}{6\kor + 1}= \frac{1}{(6\kor+1)}.
\end{align}
Noting that any predictor $\hat{\f}$ will output a function corresponding to one of the two cases above and combining equations~\eqref{eq:gap-lin-c1} and~\eqref{eq:gap-lin-c2} establishes the desired claim.
\end{proof}
While the information theoretic results of this section showed that the plug-in estimator is minimax optimal, the next section focuses on whether this estimator is able to \emph{adapt} to easier problem instances -- specifically, whether our estimation procedures \algdet and \algnoise are optimal for every problem instance? We answer this in the negative and introduce a new estimator which is instance optimal. However, such an adaptivity to easier instances comes at the cost of an exponential query complexity.

\section{Instance-optimal guarantees for binary prediction}\label{sec:local-minimax}

In the previous section, we proposed query-efficient algorithms, \algdet and \algnoise, for learning a function $\hfkn$ with small excess risk using only $\tilde{O}(\nsamp \log \kor)$  queries to the $\kor$-comparison oracle. Further, the upper bounds in Corollaries~\ref{cor:bin-det} and~\ref{cor:bin-stoch} along with the lower bound of Theorem~\ref{thm:bin-pred-lower} establish that our proposed algorithms are indeed minimax optimal over the class of utility functions~$\uclass$. Given this, it is natural to ask whether our proposed algorithms are instance wise-optimal, that is, do they achieve the best possible excess-risk bounds for \emph{all} $\utrue \in \uclass$?

To simplify our presentation, we study this question at the population level,\footnote{Our analysis could be extended to the finite sample setup using the bound obtained in Theorem~\ref{thm:upper_finite}.} where we assume that the learner has access to the underlying distribution $\xdist$. This allows us to focus on the excess risk as a function of the order $\kor$ of the comparison oracle and ignore the uniform convergence term. We also restrict our attention to the deterministic noiseless oracle since one can reduce the noisy oracle to the noiseless oracle by using the averaging technique presented in Section~\ref{sec:bin-upper}.

The following proposition shows that the plug-in estimator with \algdet utilities are \emph{not instance-optimal}, that is, it does not adapt to the hardness of the learning with unknown utilities problem instance. Specifically, it constructs a problem instance $(\F, \xdist)$ with a noiseless oracle and shows that the estimate\footnote{Since we are working at the population level, we have dropped the subscript $\nsamp$ from $\hfkn$} $\hfk$ from equation~\eqref{eq:plug-in} with \algdet utility estimates has an excess risk of $\frac{1}{\kor}$ while there exists an estimator, which uses all $\kor$-queries and is able to achieve zero excess risk. 

Recall that for any utility $\utrue \in \uclass$, we denote by $\uclass_{\kor, \utrue}$ the subset of utility functions in the class $\uclass$ which are indistinguishable from $\utrue$ under the $\kor$-comparison oracle $\oracle(\cdot\;;\utrue)$.

\begin{proposition}[Plug-in with \algdet estimates is not instance-optimal]\label{prop:subopt-plugin}
For every $\kor > 2$, there exists an binary prediction instance $(\F, \xdist)$ along with an oracle $\oracle_\kor$ such that
\begin{itemize}
  \item[a)] The error of the plug-in estimate $\hfk$ from equation~\eqref{eq:plug-in} with estimated utilities $\hu_\kor$ from \algdet (Algorithm~\ref{alg:det-bin}) is non-zero, that is,
  \begin{equation*}
    \gap(\hfk, \F; \utrue) = \frac{1}{\kor}.
  \end{equation*}
  \item[b)] There exists an optimal predictor $\tf$ with zero excess-risk, that is,
  \begin{equation*}
    \sup_{\u \in \uclass_{\kor, \utrue}}\gap(\tf, \F;\u) = 0.
  \end{equation*}
\end{itemize}
\end{proposition}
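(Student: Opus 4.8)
The plan is to build a single two--atom instance on which \algdet's gap estimate is biased upward by just enough to flip the plug--in maximizer, while the family of utilities consistent with the $\kor$--oracle is so tightly pinned that one fixed classifier is simultaneously optimal for all of them. Following the one--dimensional linear setup of Theorem~\ref{thm:bin-pred-lower}, I would take two atoms $\x_+$ and $\x_-$ and work with the decision class $\F=\{\f_+,\f_-\}$ realizing exactly the two behaviours $\f_+$ (correct on $\x_+$, wrong on $\x_-$) and $\f_-$ (wrong on $\x_+$, correct on $\x_-$). Fix a parameter $c$ in the interval $\bigl(\tfrac{1}{\kor-1},\,\tfrac{2}{\kor}\bigr)$, which is non-empty precisely because $\kor>2$ (for instance $c=\tfrac{3}{2\kor}$ works once $\kor\ge 4$), and define $\utrue(\x_+,1)=1$, $\utrue(\x_-,1)=g:=\tfrac{c(\kor-1)-1}{\kor}$, and $\utrue(\cdot,0)=0$. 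Then both optimal labels equal $1$, $\utrgap(\x_+)=1=\utrue_{\sf max}$, and $\utrgap(\x_-)=g\in\bigl(0,\tfrac{1}{\kor-1}\bigr)$. Finally put $\Pr(\x=\x_+)=\tfrac{c}{1+c}$ and $\Pr(\x=\x_-)=\tfrac{1}{1+c}$, so that $\Pr(\x_+)/\Pr(\x_-)=c$, and let $\oracle_\kor=\oracle_\kor(\cdot\,;\utrue)$ be the noiseless $\kor$--comparison oracle.

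For part (a) I would trace \algdet on this instance. Since $\x_+$ has the largest gap, $\imax=\x_+$, and an easy induction shows that every query \algdet issues for the coordinate $\imax$ is answered consistently with the current estimate, so that estimate never moves: $\hugap(\x_+)=\utrue_{\sf max}=1$. For coordinate $\x_-$, the query at round $t$ compares $\tfrac{\kor}{2}g$ with $\tfrac{\kor}{2}\bigl(\hugap^{t-1}(\x_-)-2^{-t}\bigr)$, i.e.\ it returns $\ind\bigl[g\ge \hugap^{t-1}(\x_-)-2^{-t}\bigr]$; because $g<\tfrac{1}{\kor-1}<\tfrac{2}{\kor}=2^{-T}$ with $T=\log_2\kor-1$, this indicator is $0$ at every round, so the estimate halves each time and terminates at $\hugap(\x_-)=2^{-T}=\tfrac{2}{\kor}$. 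Feeding these (scale--invariant) estimates into the population plug--in rule of equation~\eqref{eq:plug-in}, one gets $\hfk=\f_-$ exactly when $\Pr(\x_-)\cdot\tfrac{2}{\kor}>\Pr(\x_+)\cdot 1$, i.e.\ when $c<\tfrac{2}{\kor}$, which holds by construction. On the other hand $\util(\f_+;\utrue)-\util(\f_-;\utrue)=\Pr(\x_+)-\Pr(\x_-)\,g=\tfrac{c-g}{1+c}=\tfrac{1}{\kor}>0$, so $\f_+$ is the genuine optimum and therefore $\gap(\hfk,\F;\utrue)=\util(\f_+;\utrue)-\util(\f_-;\utrue)=\tfrac{1}{\kor}$.

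For part (b) I would take $\tf=\f_+$ and argue it maximizes $\util(\cdot;\u)$ over $\F$ for \emph{every} $\u\in\uclass_{\kor,\utrue}$. Among the queries available to $\oracle_\kor(\cdot\,;\utrue)$ are the length--$\kor$ queries isolating a single atom (with the two label orderings swapped), whose responses force any consistent $\u$ to satisfy $\u(\x_+,1)>\u(\x_+,0)$ and $\u(\x_-,1)>\u(\x_-,0)$; and the $\kor$--query listing $\x_+$ once and $\x_-$ exactly $\kor-1$ times with opposite label assignments, whose response equals $\ind\bigl[(\kor-1)\,\ugap(\x_-)\ge \ugap(\x_+)\bigr]=\ind[(\kor-1)g\ge 1]=0$ and hence forces $(\kor-1)\,\ugap(\x_-)<\ugap(\x_+)$, where $\ugap$ denotes the gaps of $\u$. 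Writing $\rho:=\ugap(\x_-)/\ugap(\x_+)\in\bigl(0,\tfrac{1}{\kor-1}\bigr)$, a direct computation gives $\util(\f_-;\u)-\util(\f_+;\u)=\ugap(\x_+)\,\Pr(\x_-)\,(\rho-c)<0$ since $\rho<\tfrac{1}{\kor-1}\le c$; hence $\f_+\in\argmax_{\f\in\F}\util(\f;\u)$, i.e.\ $\gap(\f_+,\F;\u)=0$. Taking the supremum over $\u\in\uclass_{\kor,\utrue}$ yields $\sup_{\u\in\uclass_{\kor,\utrue}}\gap(\tf,\F;\u)=0$.

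The crux is making the two numerical windows overlap: for \algdet to fail, its upward bias ($\hugap(\x_-)=\tfrac{2}{\kor}$ versus $\utrgap(\x_-)=g$) must be enough to move the plug--in argmax, which forces $c<\tfrac{2}{\kor}$; but for $\f_+$ to be \emph{robustly} optimal the oracle must be strong enough to pin the gap ratio of every consistent $\u$ below $\tfrac{1}{\kor-1}$, which forces $c\ge \tfrac{1}{\kor-1}$. These are compatible exactly because $\tfrac{1}{\kor-1}<\tfrac{2}{\kor}$ whenever $\kor>2$ --- and the argument correctly collapses at $\kor=2$, consistent with the lower bound of Theorem~\ref{thm:bin-pred-lower}. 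The only remaining checks are routine: every multiset query used above has at most $\kor$ slots, and \algdet's auxiliary quantity $\lambda$ stays a nonnegative integer throughout, which holds for $\kor$ a power of two (the regime in which \algdet is defined).
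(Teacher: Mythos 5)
Your proof is correct, and it reaches the conclusion by a genuinely different (and in places cleaner) construction than the paper's. The underlying mechanism for part (a) is the same in both: \algdet anchors every comparison at $\x_{\imax}$ and therefore cannot resolve gaps below $\tfrac{2}{\kor}\utrue_{\sf max}$, so a point whose true gap is far below that floor gets overestimated to exactly $\tfrac{2}{\kor}$, and the marginal probabilities are tuned so that this overestimate flips the plug-in argmax while the true optimum differs by exactly $\tfrac{1}{\kor}$. Where you diverge is the construction and part (b). The paper uses \emph{three} atoms with utilities $1$, $\tfrac{4}{\kor}$, $\tfrac{2}{\kor^2}$, and exhibits $\tf$ as the output of a modified plug-in procedure that re-anchors the queries for $\x_3$ at the intermediate-utility point $\x_2$, thereby achieving resolution $\tfrac{8}{\kor^2}$ and showing that the whole consistent range $[0,\tfrac{8}{\kor^2}]$ for $\u(\x_3,1)$ yields the same argmax. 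You instead use only two atoms and do not construct an alternative estimator at all: you pin down $\uclass_{\kor,\utrue}$ directly via the single $(1,\kor-1)$ query, which certifies $\ugap(\x_-)<\ugap(\x_+)/(\kor-1)$ for every consistent $\u$, and then choose the probability ratio $c$ in the window $\bigl(\tfrac{1}{\kor-1},\tfrac{2}{\kor}\bigr)$ so that $\f_+$ is simultaneously optimal over the entire consistent class yet rejected by the plug-in. This is a perfectly legitimate reading of part (b) (the statement only asks for an optimal predictor $\tf$, not an algorithm producing it), your excess risk comes out to exactly $\tfrac{1}{\kor}$ rather than merely $\Theta(\tfrac{1}{\kor})$ as in the paper's arithmetic, and you correctly identify why the argument needs $\kor>2$ (the two windows $c<\tfrac{2}{\kor}$ and $c>\tfrac{1}{\kor-1}$ must overlap). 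The only caveats are the ones you already flag: $\kor$ should be a power of two for \algdet's schedule, and your sample choice $c=\tfrac{3}{2\kor}$ only lands in the window for $\kor\ge 4$, which is harmless given the first caveat.
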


We make a few remarks about the proposition. While the first part of the proposition shows that the excess risk $\gap(\hfk, \F;\utrue) = \frac{1}{\kor}$, the second part makes a stronger claim about the performance of $\tf$ on all utilities $\u \in \uclass_{\kor, \utrue}$. This shows that the predictor $\tf$ performs well when evaluated on an entire neighborhood around the true utility $\utrue$. We defer the proof of the proposition to Appendix~\ref{app:proofs-minmax}.

Having established that our estimators from the previous section are not adaptive, we introduce a notion of \emph{local minimax risk} and study estimators which are instance-optimal. We begin by precisely defining this notion of instance-wise minimax optimality. Recall from Section~\ref{sec:bin-lower}, our notion of minimax risk $\minmax_\kor(\F, \xdist)$ was a worst-case notion -- the minimax risk was defined as a supremum over all oracles $\oracle_\kor(\cdot\;;\utrue)$. We extend this global minimax notion to a local minimax one. In particular, for any $\utrue \in \uclass$, we define the local minimax risk around $\utrue$ as
\begin{align}
  \minmax_\kor(\F, \xdist;\utrue) \defn \inf_{\hf} \sup_{\u \in \,\uclass_{| \utrue}}\left[\gap(\hf,\F;\u) \right]\;,
\end{align}
where the infimum is again over the set of all estimators which output a function $\hf \in \F$ given access to distribution $\xdist$ and $\kor$-comparison oracle $\oracle_\kor$. Observe that this local notion of minimax risk concerns the performance of an algorithm $\hf$ around a specific instance $\utrue$ as compared to the worst-case instance.

For any utility function $\u \in \uclass$, we define its population maximizer $\f_\u \in \argmax_{\f\in \F} \util(\f;\u)$. With this notation, our next theorem provides a lower bound on this local minimax risk in terms of a local modulus of continuity with respect to the set $\uclass_{\kor, \utrue}$.

\begin{theorem}[Local minimax lower bound]\label{thm:local-lower}
  For any distribution $\xdist$  over feature space $\X$, utility function $\utrue \in \uclass$, function class $\F$ and order $\kor$ of the comparison oracle, the local minimax risk
    \begin{align}
      \minmax_\kor(\F, \xdist;\utrue) \geq  \frac{1}{2} \cdot \sup_{\u_1, \u_2 \in\; \uclass_{\kor, \utrue}} \left( \util(\f_{\u_1};\u_1) - \util(\f_{\frac{\u_1+\u_2}{2}};\u_1) \right).
    \end{align}
\end{theorem}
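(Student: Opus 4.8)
The plan is to prove the bound by a two-point, Le~Cam--style indistinguishability argument that exploits a structural feature of the noiseless $\kor$-comparison oracle considered in this section: for every $\u \in \uclass_{\kor,\utrue}$ the deterministic oracle $\oracle_\kor(\cdot\,;\u)$ answers every query exactly as $\oracle_\kor(\cdot\,;\utrue)$ does — this is precisely what it means for $\u$ to be consistent with the responses of $\oracle_\kor(\cdot\,;\utrue)$. Hence any procedure interacting only with $\xdist$ and the oracle sees the same transcript of (possibly adaptively chosen) queries and responses whichever $\u \in \uclass_{\kor,\utrue}$ is the true utility, and therefore outputs one fixed function $\hf \in \F$ on all of $\uclass_{\kor,\utrue}$. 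Consequently, for any pair $\u_1, \u_2 \in \uclass_{\kor,\utrue}$,
\[
  \sup_{\u \in \uclass_{\kor,\utrue}} \gap(\hf, \F; \u) \;\ge\; \max_{i \in \{1,2\}} \gap(\hf, \F; \u_i).
\]

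The second ingredient is that the population utility is linear in its utility argument: since $\util(\f;\u) = \En_{\x\sim\xdist}[\u(\x,\f(\x))]$, writing $\bar\u \defn \tfrac12(\u_1+\u_2)$ we have $\util(\f;\bar\u) = \tfrac12\big(\util(\f;\u_1) + \util(\f;\u_2)\big)$ for every $\f \in \F$. Lower-bounding the maximum above by the average and unfolding $\gap(\hf,\F;\u_i) = \util(\f_{\u_i};\u_i) - \util(\hf;\u_i)$, I would obtain
\[
  \max_{i}\gap(\hf,\F;\u_i) \;\ge\; \tfrac12\big(\util(\f_{\u_1};\u_1)+\util(\f_{\u_2};\u_2)\big) - \util(\hf;\bar\u).
\]
Since $\hf \in \F$ and $\f_{\bar\u}$ maximizes $\util(\cdot;\bar\u)$ over $\F$, we have $\util(\hf;\bar\u) \le \util(\f_{\bar\u};\bar\u) = \tfrac12\big(\util(\f_{\bar\u};\u_1)+\util(\f_{\bar\u};\u_2)\big)$; substituting and regrouping turns the right-hand side into $\tfrac12\big[(\util(\f_{\u_1};\u_1) - \util(\f_{\bar\u};\u_1)) + (\util(\f_{\u_2};\u_2) - \util(\f_{\bar\u};\u_2))\big]$. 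Both bracketed terms are nonnegative — each is the excess risk $\gap(\f_{\bar\u},\F;\u_i) \ge 0$ — so discarding the second gives $\max_i \gap(\hf,\F;\u_i) \ge \tfrac12\big(\util(\f_{\u_1};\u_1) - \util(\f_{\bar\u};\u_1)\big)$. Taking the infimum over procedures and then the supremum over $\u_1,\u_2 \in \uclass_{\kor,\utrue}$ yields the stated bound.

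I do not expect a genuine obstacle: the whole content is the realization that the best response to an indistinguishable pair $\{\u_1,\u_2\}$ is the maximizer $\f_{\bar\u}$ of the averaged utility, which is exactly why the midpoint appears in the modulus. The points that need care are bookkeeping ones: (i) stating the indistinguishability claim for \emph{adaptive} estimators, which requires observing that in the noiseless regime the entire interaction transcript — not merely a single response — is a deterministic function of $\oracle_\kor(\cdot\,;\utrue)$ and hence common to all of $\uclass_{\kor,\utrue}$; (ii) noting that $\bar\u$ need not itself lie in $\uclass_{\kor,\utrue}$, since it enters only as an auxiliary function defining the center predictor $\f_{\bar\u} \in \F$; and (iii) if the maximizer defining $\f_{\bar\u}$ is not attained, running the argument with an $\varepsilon$-approximate maximizer and letting $\varepsilon \downarrow 0$. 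Keeping the symmetric bracket instead of discarding one term would in fact give a slightly stronger, symmetrized version of the lower bound.
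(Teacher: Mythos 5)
Your proposal is correct and follows essentially the same route as the paper's proof: reduce the supremum over $\uclass_{\kor,\utrue}$ to an average over an indistinguishable pair $\{\u_1,\u_2\}$, observe that the infimum of the averaged excess risk is attained by $\f_{\frac{\u_1+\u_2}{2}}$ via linearity of $\util(\f;\cdot)$, and discard the nonnegative second term. You simply make explicit two points the paper leaves implicit — the transcript-based indistinguishability for adaptive procedures and the possibility of retaining both bracketed terms for a symmetrized bound.
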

\begin{proof}
  Consider any two utility functions $\u_1, \u_2 \in \uclass_{\kor, \utrue}$ and let $\bar{u} = \frac{\u_1+\u_2}{2}$. We can then lower bound the minimax risk as
  \begin{align*}
    \minmax_\kor(\F, \xdist;\utrue) &\geq \inf_{\f \in \F} \left(\frac{1}{2}\gap(\f, \F;\u_1) + \frac{1}{2} \gap(\f, \F;\u_2) \right)\\
    &=\frac{1}{2}\gap(\f_{\bar{u}}, \F;\u_1) + \frac{1}{2} \gap(\f_{\bar{u}}, \F;\u_2)\\
    &\geq \frac{1}{2} \left(\util(\f_{\u_1};\u_1) - \util(\f_{\bar{\u}};\u_1) \right),
  \end{align*}
  where the last equality follows by noting that $\gap(\f_{\bar{u}}, \F;\u_2) \geq 0$. Since the above holds for any choice of $\u_1, \u_2$, the desired bound follows by taking a supremum over these values.
\end{proof}
A few comments on Theorem~\ref{thm:local-lower} are in order. The theorem establishes that the local minimax risk $\minmax_{\kor}(\F, \xdist)$ is lower bounded by a local modulus of continuity, 
\begin{align}
    \sup_{\u_1, \u_2 \in\; \uclass_{\kor, \utrue}} \left( \util(\f_{\u_1};\u_1) - \util(\f_{\frac{\u_1+\u_2}{2}};\u_1) \right)\;,
\end{align}
which captures the worst-case variation in the performance of utility maximizers of utility in a neighborhood of $\utrue$. For any two utilities $\u_1, \u_2 \in \uclass_{\kor, \utrue}$, it measures the performance drop in the utility of a learner uses the maximizer $\f_{\frac{\u_1+\u_2}{2}}$ in place of $\f_{\u_1}$ when the underlying utility is $\u_1$.

Given this lower bound on the local minimax risk $\minmax_{\kor}(\F, \xdist)$, it is natural to ask whether this local modulus of continuity exactly captures the instance-specific hardness of the problem. To this end, our next result answers this in the affirmative. In particular, it shows that for any $\utrue$, the randomized minimax robust estimator $\pfrob \in \simplex_\F$, given by
\begin{align}\label{eq:f-rob}
  \pfrob  \in  \argmin_{p \in \simplex_\F} \sup_{\u \in \uclass_{\kor, \utrue}}\En_{\f \sim p}[\gap(\f, \F;\u)],
\end{align}
(nearly-)obtains the same excess-risk bound as that given by the lower bound in Theorem~\ref{thm:local-lower}.
\begin{theorem}[Upper bounds for $\pfrob$]\label{thm:local-upper} For any distribution $\xdist$  over feature space $\X$, utility function $\utrue \in \uclass$ and function class $\F$, the expected excess risk of the randomized estimator given by the distribution $\pfrob \in \simplex_\F$ is
  \begin{align}\label{eq:local-upper}
    \En[\gap(\pfrob, \F;\utrue)] &= \sup_{\ld_\u}\left( \En_{\u' \sim \ld_u} \left[\util(\f_{\u'};\u') - \util(\f_{\ld_u};\u') \right]\right)\nonumber \\
    &\leq \sup_{\u_1, \u_2 \in\; \uclass_{\kor, \utrue}} \left( \util(\f_{\u_1};\u_1) - \util(\f_{\u_2};\u_1) \right),
  \end{align}
  where the distribution $\ld_\u \in \simplex_{\uclass_{\kor, \utrue}}$ is over the space of utility functions consistent with $\utrue$.
\end{theorem}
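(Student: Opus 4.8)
The plan is to read the defining optimization for $\pfrob$ as the value of a two‑player zero‑sum game and evaluate that value by minimax duality. Write $v^\star \defn \min_{p \in \simplex_\F}\sup_{\u \in \uclass_{\kor,\utrue}} \En_{\f \sim p}[\gap(\f,\F;\u)]$; this is exactly the worst‑case risk over the indistinguishable neighborhood that the minimizer $\pfrob$ attains, i.e.\ the quantity denoted $\En[\gap(\pfrob,\F;\utrue)]$ on the left of \eqref{eq:local-upper}. I would establish the first (equality) line by duality plus a direct evaluation of the inner minimization, and the second (inequality) line by a one‑step comparison that exploits convexity of $\uclass_{\kor,\utrue}$.

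\emph{The equality.} Since $\En_{\u\sim\ld_u}\En_{\f\sim p}[\gap(\f,\F;\u)]$ is an average of the numbers $\En_{\f\sim p}[\gap(\f,\F;\u)]$, replacing the inner $\sup$ over $\u\in\uclass_{\kor,\utrue}$ by a $\sup$ over distributions $\ld_u\in\simplex_{\uclass_{\kor,\utrue}}$ leaves the value unchanged, so $v^\star = \min_{p}\sup_{\ld_u}\En_{\u\sim\ld_u}\En_{\f\sim p}[\gap(\f,\F;\u)]$. The map $(p,\ld_u)\mapsto \En_{\u\sim\ld_u}\En_{\f\sim p}[\gap(\f,\F;\u)]$ is bilinear on a product of convex sets of probability measures, so Sion's minimax theorem lets me swap $\min$ and $\sup$ (invoked with $\simplex_\F$ compact — e.g.\ after reducing $\F$ to the finitely many decision patterns that matter for $\xdist$, or restricting the adversary to finitely supported $\ld_u$). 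For a fixed $\ld_u$ the objective is linear in $p$, so the inner minimum is attained at a point mass: $\min_{\f\in\F}\En_{\u\sim\ld_u}[\gap(\f,\F;\u)]$. Expanding $\gap(\f,\F;\u) = \util(\f_\u;\u) - \util(\f;\u)$, the first term is independent of $\f$, and since $\u\mapsto\util(\f;\u) = \En_{\x\sim\xdist}[\u(\x,\f(\x))]$ is linear, $\En_{\u\sim\ld_u}[\util(\f;\u)] = \util(\f;\bar\u)$ with $\bar\u\defn\En_{\u\sim\ld_u}[\u]$. Hence the minimizing $\f$ is $\f_{\bar\u}$, which is the function written $\f_{\ld_u}$ in the statement, and $\min_{\f\in\F}\En_{\u\sim\ld_u}[\gap(\f,\F;\u)] = \En_{\u'\sim\ld_u}[\util(\f_{\u'};\u')] - \util(\f_{\ld_u};\bar\u) = \En_{\u'\sim\ld_u}[\util(\f_{\u'};\u') - \util(\f_{\ld_u};\u')]$, using linearity once more for the last equality. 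Taking $\sup$ over $\ld_u$ gives the first displayed line.

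\emph{The inequality.} The key point is that $\uclass_{\kor,\utrue}$ is convex: with the noiseless oracle each query imposes a linear condition on $\u$ — a half‑space $\u(\bx,\by_1)\ge\u(\bx,\by_2)$ or its reverse, as $\u(\bx,\by)=\sum_i\u(\x_i,\y_i)$ is linear in $\u$ — so $\uclass_{\kor,\utrue}$ is an intersection of half‑spaces with the convex box $\uclass$. Consequently $\bar\u=\En_{\u\sim\ld_u}[\u]\in\uclass_{\kor,\utrue}$ (expectation preserves both the non‑strict and strict defining inequalities). Thus for every $\u'$ in the support of $\ld_u$ the pair $(\u_1,\u_2)=(\u',\bar\u)$ lies in $\uclass_{\kor,\utrue}^2$, so $\util(\f_{\u'};\u') - \util(\f_{\ld_u};\u') = \util(\f_{\u_1};\u_1) - \util(\f_{\u_2};\u_1) \le \sup_{\u_1,\u_2\in\uclass_{\kor,\utrue}}(\util(\f_{\u_1};\u_1) - \util(\f_{\u_2};\u_1))$; averaging over $\u'\sim\ld_u$ and then taking $\sup$ over $\ld_u$ yields the second line of \eqref{eq:local-upper}.

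I expect the only genuine subtlety to be the minimax step: one must verify a compactness/continuity hypothesis (hence the reduction of $\F$ to finitely many $\xdist$‑relevant decision patterns, or limiting the adversary to finitely supported mixtures) and confirm that $\bar\u$ stays in $\uclass_{\kor,\utrue}$ for arbitrary $\ld_u$ — which it does since an expectation of a.s.\ strictly negative quantities is strictly negative. Everything else is routine manipulation using the linearity of $\u\mapsto\util(\f;\u)$ together with the already‑established fact that the inner minimum over $\simplex_\F$ is attained at a deterministic decision function.
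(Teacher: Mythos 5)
Your proof is correct, and while the first (equality) line follows essentially the paper's route, your argument for the second (inequality) line is genuinely different. For the equality, both you and the paper invoke Sion's minimax theorem on the bilinear payoff and then evaluate the inner optimization; the paper phrases this via a joint distribution $\ld \in \simplex_{\F\times\uclass_{\kor,\utrue}}$ decomposed as $\ld_\u\cdot\ld_{\f|\u}$, whereas you lift only the adversary to mixtures and collapse the learner's side to a point mass directly — the same calculation, and your explicit identification $\f_{\ld_\u}=\f_{\bar\u}$ with $\bar\u=\En_{\u\sim\ld_\u}[\u]$ (via linearity of $\u\mapsto\util(\f;\u)$) is a useful clarification the paper leaves implicit. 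You are also more careful than the paper about the compactness/semicontinuity hypotheses needed for Sion. For the inequality, the paper adds and subtracts $\En_{\tilde\u\sim\ld_\u}[\util(\f_{\tilde\u};\u')]$, discards a term that is nonpositive in expectation because $\f_{\ld_\u}$ maximizes $\En_{\u'\sim\ld_\u}[\util(\cdot;\u')]$, and then bounds the remaining expected deviation by the worst-case pair $(\u_1,\u_2)$; this needs no structural property of $\uclass_{\kor,\utrue}$ beyond membership of the support. You instead observe that $\uclass_{\kor,\utrue}$ is convex for the noiseless oracle (an intersection of half-spaces, open or closed according to the response, with the box $\uclass$), so $\bar\u\in\uclass_{\kor,\utrue}$ and the pair $(\u',\bar\u)$ is itself admissible, giving a direct pointwise comparison. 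Your route yields the marginally sharper statement that the supremum may be restricted to pairs whose second coordinate is a mixture of the first's neighborhood, but it leans on convexity of the consistent set — which holds here since Section~\ref{sec:local-minimax} restricts to the deterministic oracle — whereas the paper's add-and-subtract argument would survive even if $\uclass_{\kor,\utrue}$ were not convex. Both arguments are valid and deliver the stated bound.
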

We defer the proof of Theorem~\ref{thm:local-upper} to Appendix~\ref{app:proofs-minmax}. Compared with the lower bound of Theorem~\ref{thm:local-lower}, the bound in~\eqref{eq:local-upper} shows that the local minimax risk can indeed be upper bounded by a similar local modulus of continuity. Observe that the while the lower bound evaluates the performance loss of the maximizer $\f_{\frac{\u_1 + \u_2}{2}}$, the upper bound is evaluated on $\f_{\u_2}$. While the minimax estimator $\pfrob$ in equation~\eqref{eq:f-rob} is defined at the population level, we can naturally extend it to the finite sample regime as
\begin{align}
    \hat{\ld}_{{\sf rob}, \nsamp} \in \argmin_{\ld \in \simplex_\F} \sup_{u \in \hat{\uclass}_{\kor, \utrue}} \En_{\f \sim \ld}[\hutil(\f_\u;\u) - \hutil(\f;\u)]
\end{align}
where the class of utilities $\hat{\uclass}_{\kor, \utrue}$ represents the set of all $n$-dimensional vectors in $[0,1]^\nsamp$ which are consistent with responses to all $\kor$-queries on the set of sampled datapoints $\xset$. Using a similar analysis as in Theorem~\ref{thm:upper_finite}, one can then upper bound the excess risk of this estimator in terms of the local modulus on the dataset $\xset$ and an additional uniform convergence term. 

In comparison to the \algdet procedure which uses $O(\nsamp\log \kor)$ queries to the comparison oracle for estimating utilities, the estimator $\hat{\ld}_{{\sf rob}, \nsamp}$ uses $O(\nsamp^\kor)$ queries to construct the set $\hat{\uclass}_{\kor, \utrue}$. Thus, while this estimator adapts to the problem hardness, such an adaptation comes at the cost of an exponential increase in query complexity. Achieving instance-optimality by using fewer queries is an interesting question for future research. 
\section*{Acknowledgments}
We thank members of the InterACT lab and Steinhardt group for helpful discussion and feedback.

KB is supported by a JP Morgan AI Fellowship. This work was partially supported  by Office of Naval Research Young Investigator Award, NSF CAREER and a AFOSR grant to ADD, and by a grant from Open Philanthropy to JS.
\newpage
\appendix
\appendixpage
\section{Other related work}
This paper sits at the intersection of multiple fields of study: agnostic learning , learning with nuisance parameters, and utility learning from preferences .
Here, we review the papers that are most relevant to our contributions. 

\paragraph{Agnostic learning.} The framework of probably approximately correct (PAC) learning was introduced in their seminal work by Valiant~\cite{valiant1984}. This framework formalized the problem of learning from sampled data in a realizable setup. This was formally extended to the agnostic setup, with no assumptions on the data generating distribution, by Haussler~\cite{haussler1992}. Connections of learnability with uniform convergence were first established by Vapnik~\cite{vapnik1992}, and more recently it was established in~\cite{shalev2014} that for the general learning problem, such a uniform convergence is not necessary to establish learnability. Similar to the classical agnostic supervised learning, the learner does not know the distribution $\xdist$ but only has access to it via samples. The key difference is that the classical setup assumes that the utility function $\utrue$ is known to the learner while our framework does not.

\paragraph{Learning with nuisance parameters.} Closely related to our setup is the problem of learning with a nuisance component~\cite{foster2019} which comprises as special case the problems of heterogeneous treatment effect estimation~\cite{chernozhukov2017}, offline policy learning~\cite{athey2017}, and learning with missing data~\cite{graham2011} amongst others. In this setup, objective is to learn a predictor with small excess risk and this risk depends on a underlying nuisance parameter which is unknown to the learner a priori. The unknown utility $\utrue$ of our setup can be seen as a nuisance component in their framework. However, the two problems differ in the form of information available to the learner -- they allow the learner to directly elicit (possibly noisy) values of utility $\utrue$. They additionally require that utility $\utrue$ belongs to some pre-specified function class and their bounds depend on the rate at which this utility function is learnable over this class.

Another line of work, called double/debiased machine learning in the statistics and econometrics literature~\cite{chernozhukov2018a,chernozhukov2018b,chernozhukov2018c}, addresses semiparametric inference~\cite{robinson1988, kosorok2007} where the function class $\F$ is assumed to be a parametric family along with a non-parametric nuisance component. In addition to the differences mentioned above, this class of methods focuses on exact parameter recovery and conditions under which $\sqrt{n}$-consistent and asymptotically normal estimators can be obtained.

\paragraph{Utility estimation with preferences.} The seminal work of von Neumann and Morgenstern~\cite{morgenstern1953} established that any rational agent whose preferences satisfy certain axioms will have a utility function. Furthermore, the proof of this expected utility theorem showed these utilities could be elicited from the agent using preferences over randomized lotteries. As discussed in Section~\ref{sec:intro},  such preferences over lotteries can be seen as a special case of the $\infty$-comparison oracle. There have been several recent works studying the consequences of incomplete preferences~\cite{ok2002, galaabaatar2013} which show the existence of a class of utility functions which are consistent with these incomplete preferences. Our $\kor$-comparison oracles can be seen as a quantitative approach to studying such incomplete preferences; for each value of $\kor \geq 1$, the human expert can only compare lotteries up to a granularity of $\frac{1}{\kor}$. Our work goes a step forwards and studies the consequences of such incomplete preferences for decision-making tasks.
\section{Deferred proofs from Section~\ref{sec:bin-pred}}\label{app:proofs-bin-finite}
\subsection{Proof of Proposition~\ref{prop:upper_bin}}
The first part of the proof essentially follows the same as that for Theorem~\ref{thm:upper_finite}. The proof differs in how we upper bound Term (I) from equation~\eqref{eq:error-decomp-upper}.
\begin{align}
  \hutil(\ferm;\utrue) - \hutil(\hfkn;\utrue) &\leq \frac{1}{\nsamp} \sum_{i=1}^\nsamp (\ind[\ferm(\x_i) = \y_i] - \ind[\hfkn(\x_i) = \y_i])(\utrgap(\x_i) - \hugap(\x_i))\nonumber\\
  &= \frac{1}{\nsamp} \sum_{i=1}^\nsamp (\ind[\hfkn(\x_i) \neq \y_i] - \ind[\ferm(\x_i) \neq \y_1])(\hugap(\x_i) - \utrgap(\x_i))\\
  &\stackrel{\1}{\leq}\frac{1}{n} \sum_{i=1}^\nsamp \ind[\ferm(\x_i) \neq \y_i](\utrgap(\x_i) - \hugap(\x_i))\nonumber\\
  &\leq \max_{i}[\utrgap(\x_i) - \hugap(\x_i)]\cdot \frac{1}{\nsamp}\sum_{i=1}^\nsamp\ind[\ferm(\x_i) \neq \y_i],
\end{align}
where inequality $\1$ follows from the fact that $\hu$ is a lower estimate of $\utrue$. This establishes the desired claim.
\qed
\subsection{Proof of Lemma~\ref{lem:perf-alg-det}}
We begin by noting that for any given datapoint $\x_i$, the deterministic comparison oracle $\oracle_\kor$ when queried with $\query_{i,t}$  outputs
  \begin{align*}
    \oracle_\kor(\query_{i,j}) = \ind\left[\frac{k}{2}\utrgap(\x_i) \geq \lambda\utrue_{\sf max} \right]\;,
  \end{align*}
for values\footnote{We denote by $[d]$ the set of integers $\{1, \ldots, d\}$.} of $\lambda \in [\frac{k}{2}]$.  This effectively allows one to compare the utility gap $\utrue_i$ with $\utrue_{\sf max}$ at a multiplicative granularity of $\frac{2}{k}$. With this observation, let us establish that for any time $t \in [T]$, for any datapoint $\x_i \in \xset$, we have
\begin{align}\label{eq:to-prove}
\hugap^{t}(x_i) - \frac{\utrue_{\sf max}}{2^t} \leq \utrgap(x_i) \leq \hugap^{t}(x_i).
\end{align}
The proof will proceed via an inductive argument.\\

\noindent \underline{Base Case}. For initial time $t = 0$, by the boundedness of the utility functions, we have for all $\x_i$,
\begin{align*}
\hugap^{0}(x_i) - \utrue_{\sf max} = 0 \leq \utrgap(x_i) \leq \utrue_{\sf max} = \hugap^{0}(x_i).
\end{align*}

\noindent \underline{Induction Step}. Assume that for some $t = s$, equation~\eqref{eq:to-prove} holds for all $\x_i \in \xset$. We will now show that it holds for $t = s+1$. Note that by the induction hypothesis, the value of $\lambda$ at time $s+1$ can be equivalently written as
\begin{align*}
  \lambda = \frac{k}{2\utrue_{\sf max}}\cdot\left(\frac{\hugap^{s}(x_i) - \frac{\utrue_{\sf max}}{2^s} + \hugap^{s}(x_i)}{2} \right)\;,
  \end{align*}
  that is, as a scaled mid-point of the confidence interval at time $s$. the query $\qry_{i,t}$ then compares the gap $\utrgap(\x_i)$ with the mid-point of the confidence interval.\\

   \noindent \emph{Case 1.} If the response $r_{i,t}=1$ which implies that $\hugap^s(\x_i) \geq \frac{2\lambda}{\kor}$, the upper estimate remains the same and the lower estimate is (implicitly) moved to the mid-point $\frac{2\lambda}{\kor}$ since we know from the oracle's response that $\utrgap(\x_i)$ is greater than the mid-point. Thus, after each update, the confidence interval shrinks by a factor of $\frac{1}{2}$ and reduces to $\frac{1}{2^{s+1}}$ at the end of time $t = s+1$.\\

   \noindent \emph{Case 2.} On the other hand if $r_{i,t} = 0$, the estimate $\hugap^{s+1}(\x_i)$ is updated to be the midpoint $\frac{\utrue_{\sf max}}{2^{s+1}}$ while the lower estimate remains the same because of the oracle's response.\\

Combining both the cases above, we see that at time $t=s+1$, the confidence interval for $\utrgap(\x_i)$ is exactly $\frac{1}{2^{s+1}}$ for both the cases. Thus, we must have that
\begin{align*}
\hugap^{s+1}(x_i) - \frac{\utrue_{\sf max}}{2^{s+1}} \leq \utrgap(x_i) \leq  \hugap^{s+1}(x_i).
\end{align*}
This establishes the first part of the claim. The bound on the query complexity follows from the fact that for each datapoint $\x_i$, we use $\log_2 \kor - 1$ queries to the oracle in the procedure. This establishes the desired claim. \qed

\subsection{Proof of Corollary~\ref{cor:bin-det}}
  The excess risk of the plug-in estimator can be upper-bounded from Proposition~\ref{prop:upper_bin} as
  {\small
  \begin{align}
    \gap(\hfkn, \F; \utrue) &\leq 2\cdot\sup_{\f \in \F}\left(|\util(\f;\utrue) - \hutil(\f;\utrue)|\right)  + \max_{i}[\utrgap(\x_i) - \hugap(\x_i)]\cdot \frac{1}{\nsamp}\sum_{i=1}^\nsamp\ind[\ferm(\x_i) \neq \y_i]\nonumber \\
    &\stackrel{\text{Lemma}~\ref{lem:perf-alg-det}}{\leq} 2\cdot\sup_{\f \in \F}\left(|\util(\f;\utrue) - \hutil(\f;\utrue)|\right) +  \frac{2\utrue_{\sf max}}{\kor} \cdot \frac{1}{\nsamp}\sum_{i=1}^\nsamp\ind[\ferm(\x_i) \neq \y_i]\;,
  \end{align}}
  where the last inequality follows by noting that \algdet produces an estimate $\hugap$ of the utility gap $\utrgap(\x_i)$ with an additive error of $\frac{2\utrue_{\sf max}}{\kor}$.
\qed

\subsection{Proof of Lemma~\ref{lem:perf-alg-stoch}}
  To establish the above claim, we show that the updates to the gap estimates $\hugap^{t}$ performed by \algnoise mirror those performed by the deterministic \algdet with high probability.
    For any datapoint $\x_i$ and any time $t \in [T]$, denote by $r^*_{i,t} = \En\left[r_{i,j,t} \right]$ the expected value of the response for query $\qry_{i, j, t}$. By Assumption~\ref{ass:noise-bnd}, we have that $\ind[r^*_{i,t} < \frac{1}{2}]$ provides the true label for the query $\qry_{i,j,t}$. By an application of the Hoeffding's inequality, we have,
  \begin{align*}
    \Pr\left(\ind\left[\frac{1}{J}\sum_{j}r_{i, j, t} < \frac{1}{2}\right] \neq \ind \left[r^*_{i,t}< \frac{1}{2}\right] \right) \leq \exp\left(\frac{-J(1-2\nq)^2}{4} \right).
  \end{align*}
  Taking a union bound over all datapoints $\x_i$ and time $t \in [T]$, and substituting the value of $J = \frac{8}{(1-2\nq)^2}\log(\frac{\nsamp T}{\conf})$, we have,
  \begin{align}
    \Pr\left(\exists i,t \; \text{s.t.} \;\ind\left[\frac{1}{J}\sum_{j}r_{i, j, t} < \frac{1}{2}\right] \neq \ind \left[r^*_{i,t}< \frac{1}{2}\right] \right) \leq \conf.
  \end{align}
From the above equation, we have that with probability at least $1-\conf$, every update performed by \algnoise uses the correct label. Combining the above with the proof of Lemma~\ref{lem:perf-alg-det} establishes the required claim.
\qed

\subsection{Proof of Lemma~\ref{lem:k-lim}}
Observe that from the conditions of the lemma statement, we have
\begin{equation*}
\left(1 - \frac{1}{\kor} \right)\cdot \ugap(\x_1) \leq \ugap^1(\x_2), \ugap^2(\x_2) \leq \ugap(\x_1).
\end{equation*}
Assume without loss of generality that $\ugap^1(\x_2)> \ugap^2(\x_2)$. Observe that any $\kor$-query comprising only points $\x_1$ and $\x_2$ must have the form
\begin{equation*}
  \bx = (\underbrace{\x_1, \ldots, \x_1}_{j_1 \text{ times}}, \underbrace{\x_2, \ldots, \x_2}_{j_2 \text{ times}}),\quad \by_1= (\underbrace{\y_1, \ldots, \y_1}_{j_1 \text{ times}}, \underbrace{\bar{\y}_2, \ldots, \bar{\y}_2}_{j_2 \text{ times}}), \quad \by_1= (\underbrace{\bar{\y}_1, \ldots, \bar{\y}_1}_{j_1 \text{ times}}, \underbrace{{\y}_2, \ldots, {\y}_2}_{j_2 \text{ times}})
\end{equation*}
with $j_1 + j_2 = k$. For any query $\qry$ to be different under the oracles $\oracle_\kor(\cdot\;;\u_1)$ and $\oracle_\kor(\cdot\;;\u_2)$, we should have
\begin{align*}
\ind[j_1\ugap^1(\x_1) > j_2\ugap^1(\x_2)] = 1 \quad \text{and} \quad \ind[j_1\ugap^2(\x_1) > j_2\ugap^2(\x_2)] = 0\;
\end{align*}
since $\ugap^1(\x_2)> \ugap^2(\x_2)$. In order for the above equation to be satisfied, we requires that the ratio $\frac{j_1}{j_2} \geq 1 - \frac{1}{\kor}$. However, under the constraints $j_1 + j_2 = \kor$, this is not possible. Hence, it is not possible to distinguish between the utilities $\u_1$ and $\u_2$ using a $\kor$ comparison oracle. 
\qed

\section{Deferred proofs from Section~\ref{sec:local-minimax}}\label{app:proofs-minmax}
\subsection{Proof of Proposition~\ref{prop:subopt-plugin}}
Our example construction will focus on the real-valued feature space $\X = \real$, binary decision space $\Y = \{0,1 \}$, and the class of linear decision functions
  \begin{align*}
    \Flin = \{\f_a\; |\; \f_a(\x) = \sign(a\x), \; a \in [-1,1] \}\;.
  \end{align*}
  \paragraph{Distribution $\xdist$.} Our example will focus on three points $\x_1 = 1, \x_2 = 2, \x_3  = -1$ with their population probabilities given by
  \begin{align*}
    \Pr(\x = \x_1) = p, \quad Pr(\x = \x_2) = p, \quad \text{and} \quad \Pr(x = \x_3) = 1-2p\;,
  \end{align*}
  for some value $p > 0$ which we define later. Note that our final choice of $p$ will depend on the order $\kor$ of the comparison oracle.
 \paragraph{Utility function $\utrue$.} Given the above three points, we set the utility $\utrue(\x_i, 0) = 0$ for all datapoints $\x_i$. The utilities for label $\y = 1$ are given by
 \begin{align*}
   \utrue(\x_1, 1) = 1, \quad \utrue(\x_2, 1) = \frac{4}{\kor}, \quad \text{and} \quad \utrue(\x_3, 1) = \frac{2}{\kor^2}.
 \end{align*}
 With these utilities, observe that the true label $\y_i = 1$ for all the datapoints. Further, any predictor $\f \in \Flin$ can either correctly predict the points $\{\x_1, \x_2\}$ or the point $\x_3$ but not all three simultaneously.

 \paragraph{Performance of predictors.} For this setup described above, we now proceed to describe the optimal function $\f^*$, the plug-in estimate $\hfk$ and an alternate predictor $\tf$ which outperforms the plug-in estimate. Observe that any estimator will pick either $\f_{-1}$ or $\f_{+1}$ depending on the value of $p$.\\

 \noindent \underline{Optimal Classifier.} The difference in the expected utility between the classifiers $\f_{+1}$ and $\f_{-1}$ is given by
 \begin{align}
   \util(\f_{+1};\utrue) - \util(\f_{-1};\utrue) &= p\cdot \left(1+\frac{4}{\kor} \right) - (1-2p)\cdot\left(\frac{2}{\kor^2} \right)\nonumber \\
   &= \frac{p}{\kor^2}\cdot (\kor+2)^2 - \frac{2}{\kor^2}\nonumber\\
   &= \frac{1}{\kor^2}\left(p(\kor+2)^2 - 2\right)\nonumber.
 \end{align}
 Given the calculation above, the optimal classifier $\f^*$ is given by
 \begin{align}
   \f^* = \begin{cases}
   \f_{+1} \quad &\text{ for } p\geq \frac{2}{(\kor+2)^2}\\
   \f_{-1} \quad &\text{ otherwise}
 \end{cases}.
 \end{align}

\noindent \underline{Plug-in estimate $\hfk$.} We now study the prediction $\hfk$ obtained by using the prediction $\hu$ from \algdet (Algorithm~\ref{alg:det-bin}). Recall that since \algdet produces upper estimates for $\utrgap$ (which is equivalent to $\utrue$ since $\utrue(\x, 0) = 0$) within an error of $\frac{2}{\kor}$, the output estimates will be
\begin{align*}
  \hu(\x_1, 1) = 1, \quad \hu(\x_2, 1) = \frac{4}{\kor}, \quad \text{and}\quad \hu(\x_3, 1) = \frac{2}{\kor}.
\end{align*}
Observe that while \algdet is able to correctly learn the utilities for $\x_1$ and $\x_2$, it overestiamtes the utility for the point $\x_3$. Let us look at the difference of estimated utilities
\begin{align}
  \util(\f_{+1};\hu) - \util(\f_{-1};\hu) &= p\cdot \left(1+\frac{4}{\kor} \right) - (1-2p)\cdot\left(\frac{2}{\kor} \right)\nonumber\\
  &= \frac{p}{\kor}\cdot (k+8) -\frac{2}{\kor}\nonumber\\
  &= \frac{1}{\kor}\cdot(p(\kor+8) - 2)\nonumber.
\end{align}
Given the above calculations, we see that the function $\hfk$ is given by
\begin{align}
  \hfk = \begin{cases}
  \f_{+1} \quad &\text{ for } p\geq \frac{2}{\kor+8}\\
  \f_{-1} \quad &\text{ otherwise}
\end{cases}.
\end{align}

\noindent \underline{Alternate estimator $\tf$.} While \algdet compares the utilities of both  $\x_2$ and $\x_3$ with respect to $\x_1$ (equivalently $\x_{\imax}$), consider the alternate procedure which differs in the estimation of  utility gap $\utrgap(\x_3)$. Instead of using the proposed queries $\query_{3,t}$ of \algdet, we modify those as $\tilde{\qry}_{3,t} = (\bx, \by_1, \by_2)$ where
\begin{align*}
  \bx = (\underbrace{\x_{3}, \ldots, \x_{3}}_{\frac{k}{2} \text{ times }}, \underbrace{\x_{2}, \ldots, \x_{2}}_{\lambda \text{ times }}), \quad \by_1 = (\underbrace{\y_{3}, \ldots, \y_{3}}_{\frac{k}{2} \text{ times }}, \underbrace{1-\y_{2}, \ldots, 1- \y_{2}}_{\lambda \text{ times }}),\quad \by_2 = 1 - \by_1.
\end{align*}
Following the same proof as of Lemma~\ref{lem:perf-alg-det}, we can show that one can obtain an upper estimate $\tu(\x_3, 1) = \frac{8}{\kor^2}$. This follows from the fact that we can deduce that $\utrue(\x_3) \in [0, \frac{2\utrue(\x_2)}{\kor}]$ from the above queries and combining this with the fact that $\utrue(\x_2) \leq \frac{4}{\kor}$. Evaluating the difference between the utilities with respect to $\tu$, we get
\begin{align}
  \util(\f_{+1};\tu) - \util(\f_{-1};\tu) &= p\cdot \left(1+\frac{4}{\kor} \right) - (1-2p)\cdot\left(\frac{8}{\kor^2} \right)\nonumber \\
  &= \frac{p}{\kor^2}\cdot ((\kor+2)^2+12) - \frac{8}{\kor^2}\nonumber\\
  &= \frac{1}{\kor^2}\left(p((\kor+2)^2+12) - 8\right)\nonumber.
  \end{align}
Using such estimates $\tu$ with the plug-in estimator in equation~\eqref{eq:plug-in}, we have that the function
\begin{align}
  \tf = \begin{cases}
  \f_{+1} \quad &\text{ for } p\geq \frac{8}{(\kor+2)^2+12}\\
  \f_{-1} \quad &\text{ otherwise}
\end{cases}.
\end{align}

Thus, the three estimators $\f^*$, $\hfk$ and $\tf$ differ in the threshold for $p$ for switching between the functions $\f_{+1}$ and $\f_{-1}$. Setting a value of $p = \frac{1}{\kor+8}$, we see that for $\kor > 10$
\begin{equation*}
  \frac{2}{(\kor+2)^2} < \frac{8}{(\kor+2)^2 + 12} < \underbrace{\frac{1}{\kor+8}}_{p} < \frac{2}{\kor+8}.
\end{equation*}
Thus, for this setting of $p$, while the predictor $\f^* = \tf = \f_{+1}$, the estimator $\hfk = \f_{-1}$ and hence it incurs an excess risk $\gap(\hfk, \F;\utrue) = \frac{1}{k}$. This establishes the first part of the claim.

For the second part, observe that the estimator $\tilde{\f}$ outputs $\f_+$ for the particular setting of $p$ for all $\tu_3 \in  [0,\frac{8}{\kor^2}]$. This set precisely captures the set of all utilities which are consistent with the $\kor$ oracle $\oracle(\cdot\;;\kor)$. Since the optimal decision function $\f^* = \f_+$, this establishes the second part of the claim.\qed

\subsection{Proof of Theorem~\ref{thm:local-upper}}
Let us represent by $\simplex_\F$ the space of probability distributions over the function $\F$. The error of the estimator $\pfrob$ can then be upper bounded as
\begin{align*}
  \En[\gap(\pfrob, \F;\utrue)] &= \pinf_{\ld \in \simplex_\F} \sup_{\u'\in\; \uclass_{|\utrue}} \En_{\f}[ \gap(\f, \F; \u')]\\
  &= \pinf_{\ld \in \simplex_\F} \sup_{\u'\in\; \uclass_{|\utrue}} \sup_{\f' \in \F} \En_{\f} [\util(\f';\u') - \util(\f;\u')]\\
  &\stackrel{\1}{=} \sup_{\ld \in \simplex_{\F \times \uclass_{|\utrue}}} \pinf_{\f \in \F} \En_{(\f', \u')}\left[\util(\f';\u') - \util(\f;\u') \right],
\end{align*}
where the equality  $\1$ follows from an application of Sion's minimax theorem and the space $\simplex_{\F \times \uclass_{|\utrue}}$ denotes the space of all distributions over the joint space $\F \times \uclass_{|\utrue}$. Let us decompose the distribution $\ld = \ld_\u \cdot \ld_{\f|\u}$ where $\ld_\u$ represents the marginal distribution over the space $\uclass_{|\utrue}$ and $\ld_{\f|\u}$ denotes the conditional distirbution of sampling a function $\f \in \F$ given utility function $\u$. Denote by
\begin{align*}
  \f_u \defn \argmax_{\f\in \F}\util(\f;\u) \quad \text{and}\quad  \f_{p} \defn \argmax_{\f \in \F}\En_{\u \sim p}[\util(\f;\u)]
\end{align*}
   as the maximizers for the corresponding (expected) utility functions. Then, the excess risk
\begin{align*}
  \En[\gap(\pfrob, \F;\utrue)] &=  \sup_{\ld \in \simplex_{\F \times \uclass_{|\utrue}}} \pinf_{\f \in \F} \En_{(\f', \u')}\left[\util(\f';\u') - \util(\f;\u') \right] \\
  &= \sup_{\ld_\u}\sup_{\ld_{\f|\u}} \pinf_{\f \in \F} \left(\En_{\u' \sim \ld_u} \En_{\f' \sim \ld_{\f|\u'}}\left[\util(\f';\u')\right] - \En_{\u' \sim \ld_u}\left[\util(\f;\u') \right]\right)\\
  &\stackrel{\1}{=} \sup_{\ld_\u}\sup_{\ld_{\f|\u}} \left(\En_{\u' \sim \ld_u} \En_{\f' \sim \ld_{\f|\u'}}\left[\util(\f';\u')\right] - \En_{\u' \sim \ld_u}\left[\util(\f_{\ld_u};\u') \right]\right)\\
  &\stackrel{\2}{=} \sup_{\ld_\u}\left( \En_{\u' \sim \ld_u} \left[\util(\f_{\u'};\u') - \util(\f_{\ld_u};\u') \right]\right),
\end{align*}
where the inequality $\1$ follows from the fact that $\f_{\ld_u}$ maximizes the expected utility with respect to $\ld_u$ and $\2$ follows by noting that the maximizing distribution $\ld_{\f|\u'} = \ind[\f = \f_{\u'}]$. Noting that $\f_{\ld_u}$ is the maximizer corresponding to the distribution $\ld_u$, we have,
\begin{align*}
    \En[\gap(\pfrob, \F;\utrue)] &=   \sup_{\ld_\u}\left( \En_{\u' \sim \ld_u} \left[\util(\f_{\u'};\u') - \En_{\tilde{\u}\sim \ld_u}[\util(\f_{\tilde{\u}};\u')] + \En_{\tilde{\u}\sim \ld_u}[\util(\f_{\tilde{\u}};\u')] - \util(\f_{\ld_u};\u') \right]\right)\\
  &\leq \sup_{\ld_\u}\left( \En_{\u' \sim \ld_u} \left[\util(\f_{\u'};\u') - \En_{\tilde{\u}\sim \ld_u}[\util(\f_{\tilde{\u}};\u')] \right] \right)\\
  &\stackrel{\1}{\leq} \sup_{\u_1, \u_2 \in\; \uclass_{|\utrue}} \left( \util(\f_{\u_1};\u_1) - \util(\f_{\u_2};\u_1) \right),
\end{align*}
where the inequality $\1$ follows by upper bounding the expected deviation with a worst-case deviation. This establishes the required claim. \qed
\newpage
\printbibliography
\end{document}